\documentclass{article}
\usepackage{preprint}

\usepackage{tikz}
\usepackage{amsmath, amssymb, amsfonts, amsthm}
\usepackage{booktabs}
\usepackage{algorithm}
\usepackage{algpseudocode}
\usepackage{multirow}
\usepackage{graphicx}
\usepackage{caption}
\usepackage{enumitem}
\newtheorem{assumption}{Assumption}
\newtheorem{lemma}{Lemma}   
\newtheorem{proposition}{Proposition}
\newtheorem{theorem}{Theorem}

\usepackage{xurl}
\usepackage[hidelinks]{hyperref}
\usepackage{cleveref}
\crefname{assumption}{assumption}{assumptions}
\Crefname{assumption}{Assumption}{Assumptions}
\hypersetup{
    pdftitle={Direct Optimization of Generators for Search in Automated Theorem Proving},
    pdfauthor={Adam Ousherovitch and Ambuj Tewari}
}
\title{Direct Optimization of Generators for Search in Automated Theorem Proving}
\author{%
Adam Ousherovitch \\
Department of Statistics \\
University of Michigan \\
\texttt{aoushero@umich.edu}
\And
Ambuj Tewari \\
Department of Statistics \\
University of Michigan \\
\texttt{tewaria@umich.edu}
}
\date{}

\begin{document}

\maketitle

\begin{abstract}
Fine-tuned Large Language Models (LLMs) significantly advance Automated Theorem Proving (ATP), but are often deployed as guiding policies within tree search rather than for single-attempt generation. Recent work shows cross entropy is suboptimal for an LLM used in flat search strategies such as aggregation or filtering and that work has developed new loss functions to correct this misalignment. Extending this alignment to tree search is more challenging: proof discovery depends on exploration and recovery through off-trace states that supervised demonstrations do not reveal. We extend Compute-Aligned Training (CAT) to this setting through an abstraction of policy-guided search, deriving tractable, trace-supported losses. Alongside these search-aware losses, we introduce a search-agnostic uniform-allocation (UA) loss that accounts for the budget without specifying the specific search. Both induce scalar weights on per-tactic cross-entropy gradients. We characterize how off-trace behavior affects the search-aware weights, including conditions for vanishing approximation error at large budgets. On a Lean benchmark, both approaches achieve higher observed proof-success rates than cross-entropy across six search strategies, with strong results from a single shared UA adapter. Budget sweeps show larger gains over cross-entropy at $256$ than at $16$ expansions, implying CAT scales with test time compute.\looseness-1
\end{abstract}

\section{Introduction}
Automated Theorem Proving (ATP) is an important frontier in Artificial Intelligence (AI) \citep{yang2024formalmathematicalreasoningnew}, with applications in software and hardware verification \citep{bengio2024machinelearninginformationtheory}. Formal verifiers such as Lean~4 \citep{deMoura2021} allow generated proofs to be checked, but proof generation remains difficult because rewards are sparse and horizons are long \citep{poesia2024learningformalmathematicsintrinsic}. Large Language Models (LLMs) predict proof steps \citep{polu2020generativelanguagemodelingautomated}, yet a single flawed step can invalidate an entire attempt. Recent approaches therefore pair LLMs with structured search \citep{lample2022hypertree,polu2022formal,yang2023leancopilot}, which retains partial proofs and explores alternative continuations rather than requiring every step to succeed on the first attempt.

LLMs driving search are typically trained through Supervised Fine-Tuning (SFT) with Cross-Entropy (CE). Recent work shows that CE can be suboptimal when deployment uses search \citep{ousherovitch2026test,chen2025rethinking}, but existing corrections focus on flat strategies such as candidate aggregation \citep{wang2023selfconsistency} or filtering \citep{chen2021codex,cobbe2021verifiers}. Extending alignment to tree search is fundamentally more difficult because success probability includes unsuccessful exploration followed by recovery, and depends on how compute is allocated across intermediate proof states. An exact objective therefore requires off-trace continuations that demonstrated proof traces do not supply.

We address this gap by formalizing policy-guided search and constructing tractable Compute-Aligned Training (CAT) losses for this kind of search. Given a demonstrated proof trace, we restrict the modeled search to observed states and collapse deviations into a trace-miss event. This yields \emph{search-aware} objectives that preserve selected aspects of the strategy's compute allocation. Their gradients are weighted sums of the usual per-tactic CE gradients, emphasizing steps according to their influence on modeled proof success.

Alongside these objectives, we introduce a simple \emph{search-agnostic} uniform-allocation (UA) loss. UA assigns a uniform local compute budget to each demonstrated step, accounting for repeated sampling without specifying the deployed search rule. It thus provides a shared training objective for use across search algorithms, rather than an approximation tied to one particular deployed strategy.

Trace support introduces approximation error because demonstrations omit alternative proofs and the compute consumed by off-trace exploration. We characterize these effects through the per-tactic gradient weights and identify when search-aware weights scale better than CE with more budget.

On a Lean benchmark, both strategy-specific CAT and UA attain higher accuracy than an epoch-matched CE control under all six evaluated strategies (\Cref{tab:matrix}). A shared UA adapter performs strongly across strategies, while search-aware CAT yields an additional observed gain under Pass@$N$. Experiments also show larger gains over CE at $N=256$ than at $N=16$ (\Cref{tab:sweep}).\looseness-1

\subsection{Related Work}

\textbf{Automated Theorem Proving.} The application of deep learning to ATP has rapidly evolved alongside LLMs \citep{polu2020generativelanguagemodelingautomated, han2021proof}. Modern neural provers target interactive verification environments like Lean \citep{deMoura2021, zheng2022minif2f}, with open-weight milestones such as Lean Copilot \citep{yang2023leancopilot}, Llemma \citep{azerbayev2023llemma}, and ReProver \citep{song2025leancopilotlargelanguage} powered almost exclusively by SFT. Our work is strictly situated within this SFT paradigm. We explicitly scope our analysis away from systems that interleave informal natural-language reasoning with formal verification \citep{jiang2023draft, wu2022autoformalization}, as well as online Reinforcement Learning (RL) pipelines \citep{deepmind2024alphaproof}.

\textbf{Test-Time Search Algorithms.} The use of search in ATP falls in the wider trends of utilizing inference-time compute \citep{snell2024scaling}. Search in ATP is diverse. It includes uninformed traversal methods like Depth-First Search (DFS) \citep{pearl1984heuristics} or Pass@N \citep{chen2021codex}, and the more complex policy-guided searches like Best-First Search (BFS) \citep{hart1968formal}, most notably Levin Search and its modern neural adaptations \citep{orseau2018singleagentpolicytreesearch, orseau2021policyguidedheuristicsearchguarantees, xin2025bfsproverscalablebestfirsttree}. While prior literature has extensively benchmarked these algorithms in isolation, it treats the language model as a static, search-agnostic prior. Crucially, the literature has never investigated how the structural assumptions of these distinct search rules interact with the training of the policy itself. \looseness=-1

\textbf{Test-Time Aligned Training.} A growing body of work recognizes that standard supervised objectives fail to align models with inference-time search \citep{ousherovitch2026test, chen2025rethinking, chow2025inferenceaware, balashankar2025infaligninferenceawarelanguagemodel, tang2025optimizing}. However, these efforts focus overwhelmingly on non-branching strategies where the global search objective can be expressed as an exact function of observed generation probabilities. Within ATP, the sole existing attempt at test-time aligned training is Direct Coverage Optimization (DCO) \citep{chen2025rethinking}, which focused exclusively on Pass@N. Because the global success probability of Pass@N can be expressed as a closed-form function of on-path tactic probabilities, DCO sidesteps the structural approximation problem entirely. By contrast, state-of-the-art theorem provers rely on branching strategies whose objectives cannot be resolved purely from on-path data. This paper bridges that gap.

\subsection{Contributions}
\begin{itemize}[leftmargin=*,topsep=0pt]
    \item \textbf{A Framework for Search-Aligned Training.} We formalize policy-guided search and derive tractable losses for branching strategies. We also develop a budget-based UA loss independent of the specific search's structure. Both approaches induce scalar weights on per-tactic CE gradients.

    \item \textbf{Characterizing the Trace-Supported Approximation.} For Pass@$N$ and best-first search, we analyze how unobserved alternative proofs and off-trace compute consumption affect search-aware gradient weights, including their dependence on the test-time budget.

    \item \textbf{Empirical Evaluation in Lean.} Across six search strategies, strategy-specific CAT and UA each attain higher success rates than CE. We also evaluate gains across deployment budgets.
\end{itemize}

\section{A General Objective for Search-Aligned Training}
\label{sec:formal_cat}

This section formalizes LLM-guided search and derives a general aligned objective and the search-agnostic UA objective. We assume familiarity with Interactive Theorem Proving (ITP).\footnote{A full formalization is provided in \Cref{app:mdp_formulation}.}

\subsection{Setup}
\label{subsec:setup}

Let the environment be a state space $\mathcal{X}$ (proof states), an action space $\mathcal{Y}$, and a transition $\mathcal{T}:\mathcal{X}\times\mathcal{Y}\to\mathcal{X}\cup\{x_{err}\}$, where $x_{err}$ is the state produced by incorrect tactics. The LLM defines a stochastic policy $\pi_\theta(y\mid x)$. A dataset $\mathcal{D}$ provides valid traces; a single trace is
\[
    \tau^* = (x_0, y_1^*, x_1, \ldots, x_{L-1}, y_L^*, x_L),
    \qquad x_L = x_{QED},
\]
where each demonstrated tactic $y_t^*$ moves the environment from $x_{t-1}$ to $x_t$. Write $p_t \;=\; \pi_\theta(y_t^* \mid x_{t-1})$. Standard SFT with CE maximizes the likelihood of these traces:
\vspace{-6pt}
\begin{equation}
    L_{CE}(\theta)
    = -\sum_{t=1}^{L} \log \pi_\theta(y_t^* \mid x_{t-1})
    = -\sum_{t=1}^{L} \log p_t,
    \qquad
    L_{CE}^{(t)} := -\log p_t.
    \label{eq:sft_loss}
    \vspace{-6pt}
\end{equation}

At test time, the policy is deployed inside a search algorithm $\mathcal{A}$ with a compute budget. We wish to optimize the probability that $\mathcal{A}$, using $\pi_\theta$, finds a proof within that budget.

\subsection{Search as Configuration Dynamics}
\label{subsec:search_dynamics}

$\mathcal{A}$ is defined by its interaction with memory, which we define via \emph{configurations}: \(
    c = \langle \mathcal{F}, M \rangle .
\)

The frontier $\mathcal{F}$ holds the nodes available for expansion with a heuristic score (insertion order, internal confidence, etc.); its elements are tuples $(x,s)$ with $x\in\mathcal{X}$, $s\in\mathbb{R}$. We may write $x\in\mathcal{F}$ for $(x,s)\in\mathcal{F}$ for notational ease. $M$ holds everything else: the policy, the \emph{remaining budget}, the unavailable nodes, etc. Budget is most naturally interpreted as either tactic expansions or rollouts. Budget handling can be encoded in the specification of the strategy.

$\mathcal{A}$ selects a node $x\in\mathcal{F}$, samples a tactic $y \sim \pi_\theta(\cdot\mid x)$, executes it, and updates its configuration from the observation $z=\mathcal{T}(x,y)$. Excluding the initialization of $c_0$ from $x_0$ and the compilation of the returned proof from $M$, $\mathcal{A}$ is fully specified by two operations,
\[
    S_{\mathcal{A}}(x \mid c), \qquad U_{\mathcal{A}}(c, x, z),
\]
a (possibly stochastic) selection rule $S_{\mathcal{A}}$ and a deterministic update rule $U_{\mathcal{A}}$.

A configuration is \emph{solved} if it contains a completed proof, and \emph{dead} if its budget is exhausted. Let $J^{\mathcal{A}}(c)$ be the true probability that $\mathcal{A}$, started from $c$, ever discovers a proof. It satisfies
\begin{equation}
\label{eq:exact_search_recurrence}
    J^{\mathcal{A}}(c) =
    \begin{cases}
        1 \text{ if } c \text{ solved}; \quad 0 \text{ if } c \text{ dead (and unsolved)}, \\
        \sum_{x\in\mathcal{F}(c)} S_{\mathcal{A}}(x\mid c) \sum_{y\in\mathcal{Y}} \pi_\theta(y\mid x)\, J^{\mathcal{A}}\!\big(U_{\mathcal{A}}(c,x,\mathcal{T}(x,y))\big), & \text{otherwise}
    \end{cases}
\end{equation}
and the deployed objective is $\mathcal{J}^{\mathcal{A}}(\theta) = J^{\mathcal{A}}(c_0)$, where $c_0$ encodes the start state $x_0$ and the full budget. Returning any valid trajectory from $x_0$ to $x_{QED}$ counts as success.

\Cref{eq:exact_search_recurrence} is exact but cannot be evaluated from a supervised trace. Consider expanding a frontier node on the trace, say $x=x_{t-1}$. The tactic space partitions into three disjoint sets: the demonstrated tactic $\{y_t^*\}$, the invalid tactics $\mathcal{I}_t := \{y : \mathcal{T}(x_{t-1},y)=x_{err}\}$, and the valid but off-trace tactics $\mathcal{O}_t := \{y : \mathcal{T}(x_{t-1},y)\notin\{x_{err},x_t\}\}$. With off-trace states defined as $x_{t,y}=\mathcal{T}(x_{t-1},y)$ for $y\in\mathcal{O}_t$, the sum in \Cref{eq:exact_search_recurrence} then expands to:
\begin{equation}
\label{eq:local_exact_decomposition}
\begin{split}
    &\sum_{y} \pi_\theta(y\mid x_{t-1})\, J^{\mathcal{A}}\!\big(U_{\mathcal{A}}(c,x_{t-1},\mathcal{T}(x_{t-1},y))\big) = p_t\, J^{\mathcal{A}}\!\big(U_{\mathcal{A}}(c,x_{t-1},x_t)\big) \\[-0.5ex]
    &+ \sum_{y\in\mathcal{I}_t} \pi_\theta(y\mid x_{t-1})\, J^{\mathcal{A}}\!\big(U_{\mathcal{A}}(c,x_{t-1},x_{err})\big) + \sum_{y\in\mathcal{O}_t} \pi_\theta(y\mid x_{t-1})\, J^{\mathcal{A}}\!\big(U_{\mathcal{A}}(c,x_{t-1},x_{t,y})\big).
\end{split}
\end{equation}
The dataset supplies neither probabilities within $\mathcal{I}_t\cup\mathcal{O}_t$ nor continuations $J^{\mathcal{A}}(U_{\mathcal{A}}(c,x_{t-1},x_{t,y}))$ containing unobserved off-trace states. Off-trace tactics may lead to alternate proofs or rejoin the trace so the exact objective depends on the search space outside of our dataset.

\subsection{The Trace-Supported Surrogate}
\label{subsec:trace_supported}

To obtain a tractable objective, we restrict the search to states from the demonstrated trace. Let $\widehat{\mathcal{C}}_{\tau^*}$ be the configurations whose frontier and memory reference only $x_0,\ldots,x_L$. We replace the true transition by an offline version,
\begin{equation}
    \widehat{\mathcal{T}}_{\tau^*}(x_{t-1},y) =
    \begin{cases}
        x_t,        & y = y_t^*, \\
        \bot_\tau,  & y \neq y_t^*,
    \end{cases}
    \vspace{-8pt}
    \label{eq:trace_transition}
\end{equation}
\vspace{-8pt}

where $\bot_\tau$ marks any deviation from the trace: valid off-trace and invalid tactics are collapsed into one training-time event. This transition induces two projected update rules,
\[
    \widehat{U}_{\mathcal{A}}^{+}(\widehat{c}, x),
    \qquad
    \widehat{U}_{\mathcal{A}}^{-}(\widehat{c}, x),
\]

the positive update when the selected node advances $x_{t-1}\to x_t$, and the negative update following a trace miss. Both updates map to $\widehat{\mathcal{C}}_{\tau^*}$. At a demonstrated state, the model either samples $y_t^*$ with probability $p_t$ or deviates with probability $1-p_t$.

\paragraph{Trace-supported recurrence.}
The surrogate success probability is
\begin{equation}
\label{eq:trace_supported_recurrence}
    \widehat{J}_{\mathcal{A}}(\widehat{c}; p) =
    \begin{cases}
        1 \text{ if } \widehat{c} \text{ solved}; \quad 0 \text{ if } \widehat{c} \text{ dead}, \\
        \sum_{x\in\widehat{\mathcal{F}}(\widehat{c})} {S}_{\mathcal{A}}(x\mid\widehat{c};p) \Big[ p_{t(x)}\, \widehat{J}_{\mathcal{A}}\!\big(\widehat{U}_{\mathcal{A}}^{+}(\widehat{c},x);p\big) + (1-p_{t(x)})\, \widehat{J}_{\mathcal{A}}\!\big(\widehat{U}_{\mathcal{A}}^{-}(\widehat{c},x);p\big) \Big], & \text{else}
    \end{cases}
\end{equation}
where $t(x)$ is the index of $x$ in $\tau^*$.

We call this loss Compute Aligned Training (CAT), following \citet{ousherovitch2026test}
\begin{equation}
\widehat{\mathcal{J}}_{\tau^*}^{\mathcal{A}}(\theta)
    = \widehat{J}_{\mathcal{A}}\!\big(\widehat{c}_0;\, p(\theta)\big),
    \qquad
    \widehat{L}_{CAT}^{\mathcal{A}}(\theta)
    = -\log \widehat{\mathcal{J}}_{\tau^*}^{\mathcal{A}}(\theta),
    \label{eq:trace_supported_loss}
\end{equation}
where $\widehat{c}_0$ encodes $x_0$ and the initial budget. We will assume the recursion is finite.\footnote{Sufficient conditions for this include decrements of budget every expansion or decrements every rollout and a finite demonstrated trace.}

\subsection{The CAT Gradient}
\label{subsec:gradient}

The following Proposition shows search strategy can be completely specified for training by its triple $(S_{\mathcal{A}}, \widehat{U}_{\mathcal{A}}^{+}, \widehat{U}_{\mathcal{A}}^{-})$, and that its loss and per-tactic gradient weights follow mechanically.

\begin{proposition}[CAT Loss]
\label{prop:cat_template}
Fix a strategy $\mathcal{A}$ given by $(S_{\mathcal{A}},\widehat{U}_{\mathcal{A}}^{+}, \widehat{U}_{\mathcal{A}}^{-})$. Then:
\begin{enumerate}
    \item[(i)] \Cref{eq:trace_supported_recurrence} has a unique solution, and $\widehat{\mathcal{J}}_{\tau^*}^{\mathcal{A}}(\theta)$ depends on $\theta$ only through $p=(p_1,\ldots,p_L)$.\footnote{Differentiability naturally follows if selection is smooth in $\theta$.}
    \item[(ii)] The CAT gradient is a weighted sum of per-step cross-entropy gradients,
    \begin{equation}
        \nabla_\theta \widehat{L}_{CAT}^{\mathcal{A}}(\theta) = \sum_{t=1}^{L} \widehat{w}_t^{\mathcal{A}}    \nabla_\theta L_{CE}^{(t)},
        \qquad
        \widehat{w}_t^{\mathcal{A}}
        = \frac{p_t}{\widehat{\mathcal{J}}_{\tau^*}^{\mathcal{A}}}
          \,\frac{\partial \widehat{\mathcal{J}}_{\tau^*}^{\mathcal{A}}}{\partial p_t}.
        \label{eq:generic_cat_gradient}
    \end{equation}
    \vspace{-16pt}
\end{enumerate}
\end{proposition}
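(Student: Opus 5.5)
Part (i) follows from well-founded recursion on the finite recursion tree of \Cref{eq:trace_supported_recurrence}; part (ii) follows from the chain rule together with $\nabla_\theta p_t = -p_t\nabla_\theta L_{CE}^{(t)}$.

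\textbf{Existence and uniqueness.} The standing assumption that the recursion is finite means the directed graph on $\widehat{\mathcal{C}}_{\tau^*}$ reachable from $\widehat{c}_0$ is finite and acyclic. Its edges are $\widehat{c}\to\widehat{U}^{\pm}_{\mathcal{A}}(\widehat{c},x)$ for $x\in\widehat{\mathcal{F}}(\widehat{c})$ with $S_{\mathcal{A}}(x\mid\widehat{c};p)>0$, and its terminal nodes are the solved or dead configurations. For instance, budget strictly decreases along every edge under the sufficient conditions in the footnote. I would define the height $h(\widehat{c})$ as the length of the longest path from $\widehat{c}$ to a terminal configuration and induct on $h$.

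At height $0$, $\widehat{J}_{\mathcal{A}}$ is forced to be $1$ or $0$. At height $h>0$, the right-hand side of \Cref{eq:trace_supported_recurrence} involves only configurations of strictly smaller height. Their values are uniquely determined by the induction hypothesis, so $\widehat{J}_{\mathcal{A}}(\widehat{c};p)$ is uniquely determined as well. Any two solutions therefore agree on every reachable configuration, and in particular at $\widehat{c}_0$.

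\textbf{Dependence only on $p$.} Run the same induction with the claim that $\widehat{J}_{\mathcal{A}}(\widehat{c};p)$ is a function of $p$ alone. The base case consists of constants. In the inductive step, $\theta$ enters only through the factors $p_{t(x)}$, $1-p_{t(x)}$, the selection probabilities $S_{\mathcal{A}}(\cdot\mid\widehat{c};p)$, and the successor values. The structural point to verify, which I expect to be the main (if mild) obstacle, is that the update maps $\widehat{U}^{\pm}_{\mathcal{A}}$ do not depend on $\theta$. This must be justified from the construction:
\begin{itemize}
\item $U_{\mathcal{A}}$ is deterministic, and under the offline transition \Cref{eq:trace_transition} the only observations are $x_t$ or $\bot_\tau$, neither of which depends on $\theta$.
\item Any policy-derived quantities stored in $M$ or in frontier scores, such as confidences, are evaluated only at trace states on trace tactics. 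This is where the convention that $S_{\mathcal{A}}$ is written as $S_{\mathcal{A}}(\cdot\mid\widehat{c};p)$ is used; I would state this as the interpretation of the notation.
\end{itemize}
The recursion then expresses $\widehat{J}_{\mathcal{A}}(\widehat{c}_0;p)$ as a finite composition of sums and products of $p_t$, $1-p_t$ and $S_{\mathcal{A}}(\cdot;p)$. It is therefore a function of $p$, and a polynomial in $p$ when selection does not depend on $p$. This also gives the differentiability noted in the footnote.

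\textbf{Gradient.} Write $\widehat{\mathcal{J}}^{\mathcal{A}}_{\tau^*}(\theta)=G(p(\theta))$. The chain rule gives
\[
\nabla_\theta \widehat{L}^{\mathcal{A}}_{CAT} = -\frac{1}{G}\sum_{t=1}^{L}\frac{\partial G}{\partial p_t}\nabla_\theta p_t .
\]
Substituting $\nabla_\theta p_t = p_t\nabla_\theta\log p_t = -p_t\nabla_\theta L_{CE}^{(t)}$ yields \Cref{eq:generic_cat_gradient} with $\widehat{w}^{\mathcal{A}}_t = \frac{p_t}{G}\frac{\partial G}{\partial p_t}$, provided $G>0$ so that the loss is finite.

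If the same state appeared twice on the trace, I would treat the indices $t$ as distinct coordinates, so that the partial derivatives still refer to distinct $p_t$. The main check throughout is the $\theta$-independence of $\widehat{U}^{\pm}_{\mathcal{A}}$; everything else is routine.
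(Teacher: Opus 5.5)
Your proposal is correct and follows the paper's argument: uniqueness by backward induction over the finite (terminating) recursion, $\theta$ entering only through $p$ (including any $p$-dependent selection scores), and the chain rule combined with $\nabla_\theta p_t=-p_t\nabla_\theta L_{CE}^{(t)}$. Your version is more explicit than the paper's in three places. You induct on height rather than on a budget that decrements at every update, which also covers Pass@$N$, where advancement does not consume a rollout. You check that $\widehat{U}^{\pm}_{\mathcal{A}}$ are $\theta$-free. You require $\widehat{\mathcal{J}}>0$ so the loss is finite.
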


\begin{proof}
(i) Under the assumption the budget is finite and decrements with each update, it terminates after finitely many steps and assigns a unique value to each configuration. All $\theta$-dependence enters only through $p_{t(v)}$ or possibly through the scores $s_{\mathcal{A}}(\cdot;p)$.

(ii) By (i), $\nabla_\theta \widehat{\mathcal{J}}_{\tau^*}^{\mathcal{A}} = \sum_t \partial_{p_t}\widehat{\mathcal{J}}_{\tau^*}^{\mathcal{A}}\, \nabla_\theta p_t$. From $L_{CE}^{(t)}=-\log p_t$ we have $\nabla_\theta p_t = -p_t\,\nabla_\theta L_{CE}^{(t)}$, so
\[
    \nabla_\theta\big[-\log\widehat{\mathcal{J}}_{\tau^*}^{\mathcal{A}}\big]
    = -\frac{1}{\widehat{\mathcal{J}}_{\tau^*}^{\mathcal{A}}}
        \sum_{t=1}^{L}
        \frac{\partial \widehat{\mathcal{J}}_{\tau^*}^{\mathcal{A}}}{\partial p_t}
        \big(-p_t\,\nabla_\theta L_{CE}^{(t)}\big)
    = \sum_{t=1}^{L}
        \frac{p_t}{\widehat{\mathcal{J}}_{\tau^*}^{\mathcal{A}}}
        \frac{\partial \widehat{\mathcal{J}}_{\tau^*}^{\mathcal{A}}}{\partial p_t}
        \,\nabla_\theta L_{CE}^{(t)}. \qedhere
\]
\end{proof}

As noted by \citet{chen2025rethinking} for the case of Pass@$N$, separating $\widehat{w}_t^{\mathcal{A}}$ lets one filter low-weight gradients before they corrupt batch normalization statistics, stabilizing training.

\subsection{A Search-Agnostic Uniform-Allocation Objective}
\label{subsec:mf_main}

The same gradient form permits a compute-aware objective without specifying $\mathcal A$. For an expansion budget $N\ge L$, UA assigns each demonstrated step a uniform local budget $\bar K=N/L$ and uses
\begin{equation}
    \widehat{\mathcal J}^{\mathrm{UA}}(p;N)
    =\prod_{t=1}^{L}\big[1-(1-p_t)^{\bar K}\big],
    \qquad
    \widehat L_{CAT}^{\mathrm{UA}}
    =-\log\widehat{\mathcal J}^{\mathrm{UA}}.
    \label{eq:mf_objective_main}
\end{equation}
For integer $\bar K$, this is the probability that every demonstrated step succeeds within its own quota of independent attempts, without transferring unused quota between steps. For noninteger $N/L$, we use the continuous extension of the formula. This allocation is a modeling choice, not a claim about how a particular deployed algorithm distributes its budget. By \Cref{prop:cat_template},
\begin{equation}
    \widehat w_t^{\mathrm{UA}}
    =\frac{\bar Kp_t(1-p_t)^{\bar K-1}}
           {1-(1-p_t)^{\bar K}}.
    \label{eq:mf_weight_main}
\end{equation}
The rule depends on $N$, $L$, and $p_t$, but not on $\mathcal{A}$. At $N=L$ it recovers CE; additional local attempts reduce the emphasis on tactics already likely to be sampled. Thus UA is search-agnostic, but not compute-agnostic. Its connection to the shared-budget formulation is detailed in \Cref{app:meanfield}.

\section{Strategy-Specific Objectives}
\label{sec:strategies}
We instantiate the construction for Pass@$N$ and best-first search using \Cref{prop:cat_template}.\footnote{Objectives for other strategies are given in \Cref{app:DFS,sec:mcts}.}  

\subsection{Pass@\texorpdfstring{$N$}{N}}
\label{subsec:passn}

Pass@$N$ uses up to $N$ independent rollouts (\Cref{fig:pass_at_n}); here $N$ counts \emph{rollouts}, not tactic expansions. In the trace-supported model, let $\widehat c=(r,t)$, where $r$ counts remaining rollouts, including the current one, and $t$ indexes the next demonstrated tactic. Advancement gives $(r,t+1)$; a miss abandons the rollout and gives $(r-1,1)$. Thus \Cref{eq:trace_supported_recurrence} becomes
\begin{equation}
    \widehat J(r,t)
    =p_t\widehat J(r,t+1)+(1-p_t)\widehat J(r-1,1),
    \label{eq:passn_recurrence}
\end{equation}
with $\widehat J(r,L+1)=1$ and $\widehat J(0,t)=0$ for $t\le L$. A rollout completes the demonstrated trace with probability $\pi_{\mathrm{proof}}:=\prod_{t=1}^{L}p_t$, giving
\begin{equation}
    \widehat{\mathcal J}_{\tau^*}^{\mathrm{Pass@}N}(\theta)
    =1-(1-\pi_{\mathrm{proof}})^N.
    \label{eq:passn_objective}
\end{equation}
Applying \Cref{prop:cat_template} yields the same weight at every step of the trace:
\[
    \widehat w_t^{\mathrm{Pass@}N}
    =\frac{N\pi_{\mathrm{proof}}(1-\pi_{\mathrm{proof}})^{N-1}}
           {1-(1-\pi_{\mathrm{proof}})^N}.
\]

\subsection{Best-First Search}
\label{subsec:bfs}

We consider policy-guided best-first search (BFS) \citep{orseau2018singleagentpolicytreesearch,orseau2021policyguidedheuristicsearchguarantees,xin2025bfsproverscalablebestfirsttree}. A node $x_k$ reached through tactics $y_1,\ldots,y_k$, $k\ge1$, is scored by
\begin{equation}
    s_{\mathrm{BFS}}(x_k;\pi_\theta)
    =\left(\prod_{j=1}^{k}\pi_\theta(y_j\mid x_{j-1})\right)^{1/k}.
    \label{eq:bfs_score}
\end{equation}
The algorithm pops the highest-scoring frontier node, samples $B$ tactics there, and inserts valid children (\Cref{fig:levin_search}). Here $N$ counts \emph{tactic expansions}: each sample costs one unit.

\paragraph{Trace-supported instantiation.}
We use a retry surrogate with configuration $\widehat c=(b,t)$, where $b$ is the remaining expansion budget and the only live node is $x_{t-1}$. Starting from $(N,1)$, advancement gives $(b-1,t+1)$ and a miss gives $(b-1,t)$. Retaining the node after a miss is a modeling choice: this surrogate replaces the deployed finite per-pop sampling with retries until the demonstrated tactic is sampled or the budget is exhausted. Its singleton frontier also removes score-based competition. The recurrence is
\begin{equation}
\begin{split}
    \widehat J_{\mathrm{BFS}}(b,t)
    &=p_t\widehat J_{\mathrm{BFS}}(b-1,t+1)
      +(1-p_t)\widehat J_{\mathrm{BFS}}(b-1,t),\\
    \widehat J_{\mathrm{BFS}}(b,L+1)&=1,\qquad
    \widehat J_{\mathrm{BFS}}(0,t)=0\quad(t\le L).
\end{split}
\label{eq:bfs_trace_recurrence}
\end{equation}

\paragraph{Objective.}
Let $T_t\sim\mathrm{Geom}(p_t)$ independently count the samples needed at step $t$, including the successful sample, and write $S_L=\sum_{t=1}^{L}T_t$. For $N\ge L$, the surrogate success probability is
\begin{equation}
    \widehat{\mathcal J}_{\tau^*}^{\mathrm{BFS}}(\theta)
    =\mathbb P(S_L\le N).
    \label{eq:bfs_trace_objective}
\end{equation}
The shared budget couples the per-tactic weights, which follow by differentiating \Cref{eq:bfs_trace_recurrence}. The connection to UA's fixed local allocation is detailed in \Cref{app:meanfield}.

\section{Approximation Error}
\label{sec:main_theory}

A demonstrated trace tells us how one proof succeeds, but not what happens when search leaves that path. At state $x_{t-1}$, we can compute the probability of missing the demonstrated tactic. This section intends to characterize what is lost in not having information about this event at training. We study this missing information through two channels: alternative proofs (the \emph{bypass channel}) and compute spent on unsuccessful off-trace exploration (the \emph{trap channel}). How do these unseen consequences change the importance of sampling the demonstrated tactic? Full derivations are in \Cref{sec:scaling}.\footnote{Throughout, $p_s\in(0,1)$ and $w_t(J)=p_t\partial_{p_t}\log J$ for differentiable $J>0$. Derivatives follow \Cref{eq:reparam}: all other demonstrated probabilities, conditional distributions over non-demonstrated tactics, and off-trace policy conditionals are held fixed.}

\subsection{Effects of the Channels on the True Weight}
\label{subsec:Channels}

\paragraph{Bypass: missing the trace isn't failing.} A different tactic may lead to a valid proof without completing $\tau^*$. The surrogate cannot see these successes. For Pass@$N$, accounting for alternative proofs reduces the deployed weight relative to the trace-supported weight.

\begin{theorem}[Bypass Deflation]
\label{thm:main_bypass}
Under \Cref{assum:single_visit}, for Pass@$N$ with $N>1$ and every step $t$,
\begin{equation}
    w_t^\star
    \le \widehat w_t^{\mathrm{Pass@}N}
    < 1 = w_t^{\mathrm{CE}}.
    \label{eq:main_passn_squeeze}
\end{equation}
Consequently,
$|\widehat w_t^{\mathrm{Pass@}N}-w_t^\star|<|1-w_t^\star|$.
\end{theorem}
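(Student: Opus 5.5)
The plan is to write the deployed Pass@$N$ objective in the same form as \Cref{eq:passn_objective}, with the trace-completion probability replaced by the true per-rollout success probability. We then compare the two weights factor by factor. Rollouts are independent, so the deployed objective is $\mathcal J^\star = 1-(1-q)^N$, where $q$ is the probability that a single rollout finds some proof. Because completing $\tau^*$ is one way to succeed, $q\ge \pi_{\mathrm{proof}}>0$. Set
\[
f(u):=\frac{Nu(1-u)^{N-1}}{1-(1-u)^N}.
\]
The chain rule through $u=q$ then gives
\[
w_t^\star=\frac{p_t\,\partial_{p_t}q}{q}\,f(q),\qquad \widehat w_t^{\mathrm{Pass@}N}=f(\pi_{\mathrm{proof}}).
\]
The second formula holds because $p_t\partial_{p_t}\pi_{\mathrm{proof}}=\pi_{\mathrm{proof}}$.

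The first step is to show that $f$ is strictly decreasing on $(0,1)$ for $N>1$ and satisfies $f<1$. Using $1-(1-u)^N=u\sum_{k=0}^{N-1}(1-u)^k$, we get
\[
f(u)=\frac{N}{\sum_{j=0}^{N-1}(1-u)^{-j}}.
\]
Every term of the denominator is at least $1$, and for $N>1$ the terms with $j\ge1$ are strictly greater than $1$ and increasing in $u$. Hence $f(u)<1=w_t^{\mathrm{CE}}$ and $f$ is decreasing. Combined with $q\ge\pi_{\mathrm{proof}}$, this gives $f(q)\le f(\pi_{\mathrm{proof}})=\widehat w_t^{\mathrm{Pass@}N}<1$.

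The second step, which I expect to be the main obstacle, is to bound the elasticity factor $p_t\partial_{p_t}q/q$ by $1$. This is where \Cref{assum:single_visit} is used: a rollout visits $x_{t-1}$ at most once. Under the reparametrization \Cref{eq:reparam}, the conditional distribution over non-demonstrated tactics and all off-trace conditionals are held fixed, so $q$ is affine in $p_t$. Concretely,
\[
q=r_t+a_t\big[p_t g_t+(1-p_t)h_t\big],
\]
where the terms are defined as follows:
\begin{itemize}
\item $a_t$ is the probability of reaching $x_{t-1}$;
\item $g_t\le 1$ is the success probability continuing from $x_t$;
\item $h_t$ is the success probability after a deviation at $x_{t-1}$;
\item $r_t\ge0$ is the success probability from rollouts that never reach $x_{t-1}$.
\end{itemize}
None of $a_t,g_t,h_t,r_t$ depends on $p_t$; this is exactly what the single-visit assumption buys. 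Then $p_t\partial_{p_t}q=a_tp_t(g_t-h_t)\le a_tp_tg_t\le q$. The factor may be negative, which is harmless: since $f(q)>0$, we get $w_t^\star\le f(q)$ in either case. I would first verify carefully that single visit really rules out rejoining $x_{t-1}$ through off-trace states; that is the case in which $p_t$ would enter $h_t$ and the affine form would fail.

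Combining the two steps gives $w_t^\star\le f(q)\le\widehat w_t^{\mathrm{Pass@}N}<1$, which is \Cref{eq:main_passn_squeeze}. For the consequence, the squeeze gives $|\widehat w_t^{\mathrm{Pass@}N}-w_t^\star|=\widehat w_t^{\mathrm{Pass@}N}-w_t^\star$. This is strictly less than $1-w_t^\star=|1-w_t^\star|$ because $\widehat w_t^{\mathrm{Pass@}N}<1$.
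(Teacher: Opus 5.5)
Your proposal is correct and follows essentially the paper's own route: the chain rule through $F(x)=1-(1-x)^N$ gives $w_t^\star=\eta_t H_F(P_{\mathrm{any}})$ and $\widehat w_t=H_F(a)$, the geometric-sum form shows $H_F$ is nonincreasing and below one, and the unit-elasticity bound gives $\eta_t\le 1$; your affine decomposition $q=r_t+a_t[p_tg_t+(1-p_t)h_t]$ is exactly the paper's partition into $b_t^{\varnothing}$, $a+b_t^{\mathrm{sh}}$, and $b_t^{\mathrm{sib}}$. Your observation that the terms with $j\ge 1$ are strictly greater than $1$ when $N>1$ correctly supplies the strict inequality $\widehat w_t^{\mathrm{Pass@}N}<1$, which the appendix version states only weakly.
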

The proof and decomposition of the missing alternative-success contributions are given in \Cref{thm:passn_squeeze,eq:passn_error_decomp}.

\paragraph{Trap: a miss can cost much more than one attempt.} A valid but unhelpful tactic can lead search into a branch that consumes many expansions before the demonstrated state is selected again. Unlike an immediate rejection, this excursion leaves less budget for completing the proof. Our approximation understates the cost of such misses.

To isolate this effect, we censor alternative proofs, so only completion of $\tau^*$ counts as success while off-trace exploration remains possible (\Cref{lem:censoring}). We quantify the resulting upward pressure on weights in a model with a common deterministic return cost.

\begin{theorem}[Shared-Budget Bracketing under Deterministic Excursions]
\label{thm:main_trap}
Under \Cref{assum:effective_deviation_cost_revised}, let $N\ge L$ be an integer. Every miss costs $1\le\kappa\le\bar\kappa<\infty$ expansions before returning to the same trace state; successful trials cost one, and $\kappa$ is independent of $p$. Then
\begin{equation}
    w_t^{\mathrm{trap}}(N)
    \in\left[
        \widehat w_t(N),\;
        \widehat w_t\!\left(L+\frac{N-L}{\bar\kappa}\right)
    \right]\subseteq[\widehat w_t(N),1],
    \label{eq:main_trap_bracket}
\end{equation}
where $w_t^{\mathrm{trap}}=w_t(J^{\mathrm{trap}})$ and noninteger budget arguments are rounded down.
\end{theorem}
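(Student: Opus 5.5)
The plan is to reduce the trap model exactly to the retry surrogate of \Cref{eq:bfs_trace_recurrence} evaluated at a smaller, effective budget, and then use monotonicity of the surrogate weight in the budget.

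\textbf{Step 1 (reduction to an effective budget).} Under censoring (\Cref{lem:censoring}) and \Cref{assum:effective_deviation_cost_revised}, let $G_t:=T_t-1$ be the number of misses at step $t$. The $G_t$ are independent with $\mathbb P(G_t=k)=p_t(1-p_t)^k$. Since every miss costs exactly $\kappa$ and every success costs one, the trace completes within budget iff $L+\kappa M\le N$, where $M=\sum_t G_t$. This event equals $\{M\le \lfloor (N-L)/\kappa\rfloor\}$. In the surrogate, $\widehat{\mathcal J}(n)=\mathbb P(S_L\le n)=\mathbb P(M\le n-L)$. Hence
\[
J^{\mathrm{trap}}(N)=\widehat{\mathcal J}(N_\kappa), \qquad N_\kappa:=L+\lfloor (N-L)/\kappa\rfloor .
\]
Because $\kappa$ does not depend on $p$, differentiating gives $w_t^{\mathrm{trap}}(N)=\widehat w_t(N_\kappa)$. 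From $1\le\kappa\le\bar\kappa$ we get
\[
L\le L+\lfloor (N-L)/\bar\kappa\rfloor\le N_\kappa\le N .
\]
The bracket in \Cref{eq:main_trap_bracket} therefore follows once we show that $n\mapsto\widehat w_t(n)$ is nonincreasing on integers $n\ge L$ and satisfies $\widehat w_t(L)=1$.

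\textbf{Step 2 (a formula for the weight).} Write
\[
\mathbb P(M\le m)=\sum_k p_t(1-p_t)^k\,\mathbb P(M_{-t}\le m-k),
\]
where $M_{-t}=\sum_{s\neq t}G_s$. Use $p\,\partial_p\log[p(1-p)^k]=1-kp/(1-p)$. This gives
\[
\widehat w_t(L+m)=1-\frac{p_t}{1-p_t}\,\mathbb E[G_t\mid M\le m].
\]
At $m=0$ the conditioning forces $G_t=0$, so $\widehat w_t(L)=1$, which recovers the CE weight. For all $m$ we have $\widehat w_t\le 1$, since $G_t\ge0$. Monotonicity in $n$ is thus equivalent to $m\mapsto\mathbb E[G_t\mid M\le m]$ being nondecreasing.

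\textbf{Step 3 (monotonicity; main obstacle).} This is where the real work lies. The route I would take uses log-concavity.
\begin{itemize}
\item Each $G_s$ is geometric, hence log-concave on $\mathbb Z_{\ge0}$. Log-concavity is preserved under convolution, so $M_{-t}$ and $M$ are log-concave.
\item By Efron's theorem, for independent log-concave $G_t$ and $M_{-t}$, the map $s\mapsto\mathbb E[G_t\mid M=s]$ is nondecreasing.
\item Write $\mathbb E[G_t\mid M\le m]=\sum_{s\le m}\mathbb E[G_t\mid M=s]\,\mathbb P(M=s\mid M\le m)$.
\item The law of $M$ conditioned on $\{M\le m\}$ is stochastically increasing in $m$: passing from $m$ to $m+1$ adds mass at the top and renormalizes.
\item Averaging a nondecreasing function against a stochastically larger law gives a larger value, so $\mathbb E[G_t\mid M\le m]$ is nondecreasing in $m$.
\end{itemize}
If the appeal to Efron's theorem proves awkward, the fallback is a direct check. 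For $m\ge1$ the ratio $\mathbb P(G_t=k,\,M=s)/\mathbb P(M=s)$ has the monotone likelihood ratio property in $s$, which can be verified directly for geometrics by writing $M_{-t}$ as a convolution of geometrics. Either route yields $\widehat w_t(N)\le\widehat w_t(N_\kappa)\le\widehat w_t(L+\lfloor(N-L)/\bar\kappa\rfloor)\le\widehat w_t(L)=1$, which is exactly the claimed bracket.
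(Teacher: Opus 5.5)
Your argument is correct. Step~1 is the paper's reduction (\Cref{prop:shared_budget_bracketing_revised}: each miss costs $\kappa$, so completion within $N$ is the event $S_L\le L+(N-L)/\kappa$). Because $M$ is integer-valued, your version also shows that rounding the budget down is exact rather than merely a convention.

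The real difference is how you prove that $n\mapsto\widehat w_t(n)$ is nonincreasing. The paper (\Cref{lem:criticality_revised}) writes $1-\widehat w_t(n)=\mathbb E_{T_t'}\!\left[F(n-T_t')/F(n)\right]$, where $F$ is the CDF of $S_L$. It then needs only one fact: a log-concave pmf has a log-concave CDF, so each ratio $F(n-k)/F(n)$ is nondecreasing in $n$.

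Your formula $\widehat w_t(L+m)=1-\frac{p_t}{1-p_t}\mathbb E[G_t\mid M\le m]$ is the same identity after size-biasing, since $\frac{p_t}{1-p_t}\,k\,p_t(1-p_t)^k=\mathbb P(T_t+T_t'=k+1)$. Your monotonicity proof, however, uses two ingredients. The first is Efron-type monotonicity of $\mathbb E[G_t\mid M=s]$ in $s$. Your likelihood-ratio fallback for this is valid: with $g$ the pmf of $M_{-t}$, the ratio $g(s'-k)/g(s-k)$ is nondecreasing in $k$ because $g$ is log-concave with contiguous support. The second is the distribution-free fact that the right-truncations $\mathcal L(M\mid M\le m)$ increase stochastically in $m$.

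The paper's route is shorter and gives $0\le\widehat w_t\le1$ at once. Yours buys interpretability: the weight is one minus the odds-scaled expected number of misses at step $t$ given success. That makes $\widehat w_t(L)=1$ and $\widehat w_t\le1$ immediate, and explains monotonicity as ``a larger budget tolerates more misses at step $t$.'' Nonnegativity is not visible from your formula, but the bracket does not need it.
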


The mechanism is a reduction in effective budget to $N_{\mathrm{eff}}=L+(N-L)/\kappa$  (\Cref{prop:shared_budget_bracketing_revised}). The surrogate assumes $\kappa=1$: a miss costs one expansion. At the other extreme, if every miss prevents recovery, success requires following the trace without mistakes, recovering CE's unit weights. Finite excursion costs interpolate between these cases within this model.

\paragraph{What the trace cannot determine.} Two environments can agree on every demonstrated transition, all $p_s$, the search rule, and the budget, yet differ in what a miss causes. When those off-trace differences produce distinct deployed weights, no rule using only these trace-level inputs can be exact in both environments (\Cref{prop:irreducible_ambiguity}). Thus, solving the trace-supported recurrence exactly does not resolve the missing information about recovery.

\subsection{Large-Compute Scaling: Can Search Recover from a Miss?}
\label{subsec:main_scaling}

More compute can compensate for a delay, but not for permanent loss of the opportunity to recover. At $N=L$ in the trap-only model, there is no room for wasted expansions: success requires every demonstrated tactic on its first attempt, and both the deployed and surrogate weights equal one. Larger budgets create opportunities to recover, making the outcome depend on whether excursions eventually return. The following limits hold with the policy and excursion parameters fixed.

\begin{theorem}[Large-Budget Limits and Recoverability]
\label{thm:main_scaling}
\begin{enumerate}[leftmargin=*,topsep=0pt]
    \item[(i)] \textbf{Pass@$N$.}
    Under \Cref{assum:single_visit},
    \[
        w_t^\star(N)\longrightarrow 0,
        \qquad \widehat w_t(N)\longrightarrow 0.
    \]
    Hence the search-aware weight error tends to zero, while CE's weight error tends to one.

    \item[(ii)] \textbf{Shared-budget excursions.}
    Under \Cref{assum:absorbing_excursions}, each miss at step $s$ independently causes permanent absorption with probability $q_s\in[0,1)$; otherwise it incurs a bounded excursion before returning to the same trace state. Only the demonstrated proof counts as success. Holding $q_s$ and the finite-excursion laws fixed under differentiation,
    \[
        \widehat w_t(N)\longrightarrow 0,
        \qquad w_t^{\mathrm{trap}}(N)\longrightarrow
        \omega_t:=\frac{q_t}{p_t+(1-p_t)q_t}.
    \]
\end{enumerate}
\end{theorem}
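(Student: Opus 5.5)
The plan is to handle both parts in the same way. First write the relevant success probability as a closed-form or explicitly summable function of the demonstrated probabilities. Then take $N\to\infty$ in that expression and in its $p_t$-derivative, and read off the weight $w_t=p_t\partial_{p_t}\log J$. The surrogate limits in both parts follow from the formulas already in hand: $\widehat{\mathcal J}^{\mathrm{Pass@}N}$ in \Cref{eq:passn_objective}, and $\widehat{\mathcal J}^{\mathrm{BFS}}=\mathbb P(S_L\le N)$ in \Cref{eq:bfs_trace_objective}.

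\textbf{Part (i).} Under \Cref{assum:single_visit}, rollouts are i.i.d. and each succeeds, by the trace or by a bypass, with some probability $\rho(p)$. Hence $J^\star(N)=1-(1-\rho)^N$. Under the reparametrization of \Cref{eq:reparam}, $\rho$ is affine in $p_t$ and satisfies $\rho\ge\pi_{\mathrm{proof}}>0$. This gives
\[
w_t^\star(N)=\frac{N(1-\rho)^{N-1}}{1-(1-\rho)^N}\,p_t\,\partial_{p_t}\rho .
\]
The factor $p_t\partial_{p_t}\rho$ does not depend on $N$ and is finite. The prefactor $N(1-\rho)^{N-1}$ tends to $0$ because $\rho>0$, and the denominator tends to $1$. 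So $w_t^\star(N)\to0$. The surrogate is the special case $\rho=\pi_{\mathrm{proof}}$, for which $p_t\partial_{p_t}\rho=\pi_{\mathrm{proof}}$, so $\widehat w_t(N)\to0$ as well. Since $w_t^{\mathrm{CE}}=1$, CE's weight error tends to one.

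\textbf{Part (ii), surrogate weight.} For $\widehat w_t$ I would use the representation $\widehat{\mathcal J}=\mathbb P(S_L\le N)$. Write $1-\widehat{\mathcal J}=\mathbb P(S_L>N)$. This tail is a finite sum of terms of the form (polynomial in $N$) $\times(1-p_s)^N$, so it and its $p_t$-derivative both decay geometrically. Hence $\partial_{p_t}\widehat{\mathcal J}\to0$ while $\widehat{\mathcal J}\to1$, which gives $\widehat w_t\to0$.

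\textbf{Part (ii), true weight.} At step $s$, each trial succeeds with probability $p_s$. A miss absorbs with probability $q_s$, and otherwise returns to $x_{s-1}$ after a bounded excursion. The event ``step $s$ is eventually cleared'' is a geometric race between success and absorption, with probability $a_s:=p_s/(p_s+(1-p_s)q_s)$. Steps are independent, so
\[
J^{\mathrm{trap}}(\infty)=\prod_s a_s .
\]
Direct differentiation gives
\[
p_t\partial_{p_t}\log a_t=1-\frac{p_t(1-q_t)}{p_t+(1-p_t)q_t}=\omega_t ,
\]
and the other factors do not depend on $p_t$.

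\textbf{Main obstacle: interchanging $N\to\infty$ with $\partial_{p_t}$.} I would write
\[
J^{\mathrm{trap}}(N)=J^{\mathrm{trap}}(\infty)-R_N(p),
\]
where $R_N$ is the probability that the process is neither absorbed nor finished within $N$ expansions. Each excursion costs at most $\bar\kappa$, so exceeding budget $N$ requires at least $(N-L)/\bar\kappa$ non-absorbing misses before completion. Therefore
\[
R_N=\sum_{m\ge (N-L)/\bar\kappa}(\text{terms in } m\text{ non-absorbing misses}) .
\]
Each term is a polynomial in $m$ times $\prod_s\bigl((1-p_s)(1-q_s)\bigr)^{m_s}$, and $(1-p_s)(1-q_s)<1$ because $q_s<1$ and $p_s>0$. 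Differentiating this series term by term in $p_t$ keeps the polynomial-times-geometric form, uniformly on a neighborhood of $p_t$. Hence $R_N\to0$ and $\partial_{p_t}R_N\to0$. It follows that $w_t^{\mathrm{trap}}(N)\to p_t\partial_{p_t}\log J^{\mathrm{trap}}(\infty)=\omega_t$.

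The case $q\equiv0$ recovers the surrogate limit of $0$, which is a useful consistency check. The uniform geometric bound on the derivative of the tail is the step I expect to need the most care, particularly in arguing that the excursion laws held fixed under differentiation contribute no hidden $p_t$-dependence.
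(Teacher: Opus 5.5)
Your argument is correct, and its skeleton matches the paper's. Part (i) is the same chain-rule computation: your formula is exactly $\eta_t H_F(P_{\mathrm{any}})$, and $\widehat w_t=H_F(a)$. The conclusion then follows from $H_F\to0$ for fixed argument. In part (ii) you reach the same limit $\prod_s p_s/(p_s+(1-p_s)q_s)$ through the per-step race and obtain $\omega_t$ by the same direct differentiation.

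The routes differ in the two technical steps of (ii).

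\textbf{Surrogate limit.} For $\widehat w_t\to0$ the paper uses the criticality identity of \Cref{lem:criticality_revised}, $\widehat w_t(N)=1-\mathbb P(S_L+T_t'\le N)/\mathbb P(S_L\le N)$. Both probabilities tend to one, so the limit follows in one line. Your tail-decay route also works. However, the coefficients in the ``polynomial times $(1-p_s)^N$'' expansion depend on $p$ and are singular where the $p_s$ coincide, so differentiating that expansion needs extra care. It is cleaner to use the identity, or simply to run your own part-(ii) argument with $q\equiv0$ and $\bar\kappa=1$.

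\textbf{Interchange of limit and derivative.} To interchange $N\to\infty$ with $\partial_{p_t}$, the paper appeals only to monotone convergence of $J^{\mathrm{trap}}(N)$. It adds a claimed monotonicity of $w_t^{\mathrm{trap}}(N)$ in $N$, carried over from \Cref{lem:criticality_revised}. That lemma's log-concavity argument covers plain sums of geometrics, not random excursions with absorption. Monotone convergence of $J$ alone does not give convergence of derivatives. Your explicit geometric domination of the tail and its $p_t$-derivative is therefore the more self-contained justification.

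\textbf{One fix.} Define $R_N:=\mathbb P(N<\text{cost}<\infty)$, the probability that the demonstrated proof is completed but only after more than $N$ expansions. Your current definition, ``neither absorbed nor finished within $N$,'' is not what you need. A run still alive at time $N$ may be absorbed later, so that version only bounds $J^{\mathrm{trap}}(\infty)-J^{\mathrm{trap}}(N)$ from above, which does not control derivatives. With the corrected definition,
\[
R_N=\sum_{(m_s)}\big(1-c_{N,(m_s)}\big)\prod_s p_s\big((1-p_s)(1-q_s)\big)^{m_s}.
\]
Here $c_{N,(m_s)}\in[0,1]$ is the conditional probability that the cost is at most $N$ given the miss counts and no absorption. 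It is free of $p$ because the excursion laws are held fixed. It equals $1$ whenever $\sum_s m_s\le (N-L)/\bar\kappa$. This is exactly the structure your term-by-term bound needs. It also shows that $J^{\mathrm{trap}}(N)$ is a polynomial in $p$, since each miss costs at least one expansion and so only finitely many miss patterns fit in budget $N$; differentiability is therefore automatic.
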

See \Cref{prop:passn_asymptotics,prop:hump_floor} for the derivations.

When all excursions are recoverable ($q_s=0$), both deployed and surrogate weights eventually vanish: avoiding a particular miss matters less when search has enough budget to try again. With permanent absorption, avoiding the miss retains value even with unlimited compute. The retry surrogate overlooks this risk, leaving limiting error $\omega_t$; CE's limiting error is $1-\omega_t$.

The analysis therefore identifies what trace-only training leaves unresolved: alternative routes to success, the compute needed to recover, and the possibility of never recovering. These quantities can be modeled, but are not determined by the demonstrated trace. Additional assumptions can specify them, and training-time search offers a way to collect evidence about them. Neither is replaced by evaluating the same trace-supported surrogate more exactly.

\section{Experiments}
\label{sec:experimental_methodology}

We test whether accounting for budgeted search during SFT improves proof discovery, whether matching the training objective to the deployed strategy helps, and how the resulting gains vary with test-time budget. We compare search-aware CAT and search-agnostic UA with a CE control.

\paragraph{Setup.}
We use Lean~4 through LeanDojo \citep{yang2023leandojo}, training on \texttt{leandojo-\allowbreak benchmark-\allowbreak 4-\allowbreak random}, a random split of mathlib4 \citep{mathlib2020}. The held-out pool contains $458$ theorems whose reference proofs have $2$--$5$ steps; any kernel-verified proof counts as success. We initialize from Qwen2.5-Math-7B-Instruct \citep{qwen2024qwen25} and fine-tune with LoRA \citep{hu2022lora}. Within each experiment, models share a CE warm-up.\footnote{Two warm-up epochs for \Cref{subsec:exp_matrix} and one for \Cref{subsec:exp_sweep}.} The control then receives one additional CE epoch, while the search-aware and UA models use one additional epoch with their respective weights. Data, ordering, and optimizer settings are otherwise identical. Evaluation budgets count node expansions, each comprising one tactic sample and its kernel verification.\footnote{The Pass@$N$ objective in \Cref{subsec:passn} counts rollouts. We use $N/4$ as its training parameter, a fixed approximation to the expansion budget rather than an exact conversion.} Full details are in \Cref{app:experimental_details}.

\subsection{Experiment 1: Training for Search Across Strategies}
\label{subsec:exp_matrix}

At $N=256$ expansions, we evaluate Pass@$N$, BFS, DFS variants (\Cref{app:DFS}), and Monte Carlo Tree Search (MCTS) \citep{kocsis2006bandit}.\footnote{Our MCTS implementation uses the policy's Levin score as its value estimate.} For each deployed strategy, we compare CE, the corresponding search-aware weights, and the same single shared UA adapter. This study also applies wall-clock caps; capped runs without a proof count as failures (\Cref{app:details_inference}).

\begin{table}[ht!]
\centering
\caption{\textbf{Proofs found (\%) at $N=256$ on $458$ held-out theorems.} Columns are deployed strategies. The same UA adapter is used throughout. $^{*}$ marks $p<0.05$ against CE under the same strategy; bold marks the highest reported rate.}
\label{tab:matrix}
\small
\begin{tabular}{lcccccc}
\toprule
Training objective & Pass@$N$ & BFS & DFS & RDFS & VDFS & MCTS \\
\midrule
CE & 21.2 & 23.8 & 19.2 & 20.7 & 21.2 & 20.3 \\
Search-aware CAT
 & \textbf{26.6}$^{*}$ & 25.1 & 20.7 & 23.6$^{*}$ & 23.8 & 22.5 \\
Shared UA
 & 25.1$^{*}$ & \textbf{26.9} & \textbf{23.6}$^{*}$
 & \textbf{25.3}$^{*}$ & \textbf{24.5}$^{*}$ & \textbf{25.1}$^{*}$ \\
\bottomrule
\end{tabular}
\end{table}

\begin{figure}[ht]
\centering
\includegraphics[width=0.8\linewidth]{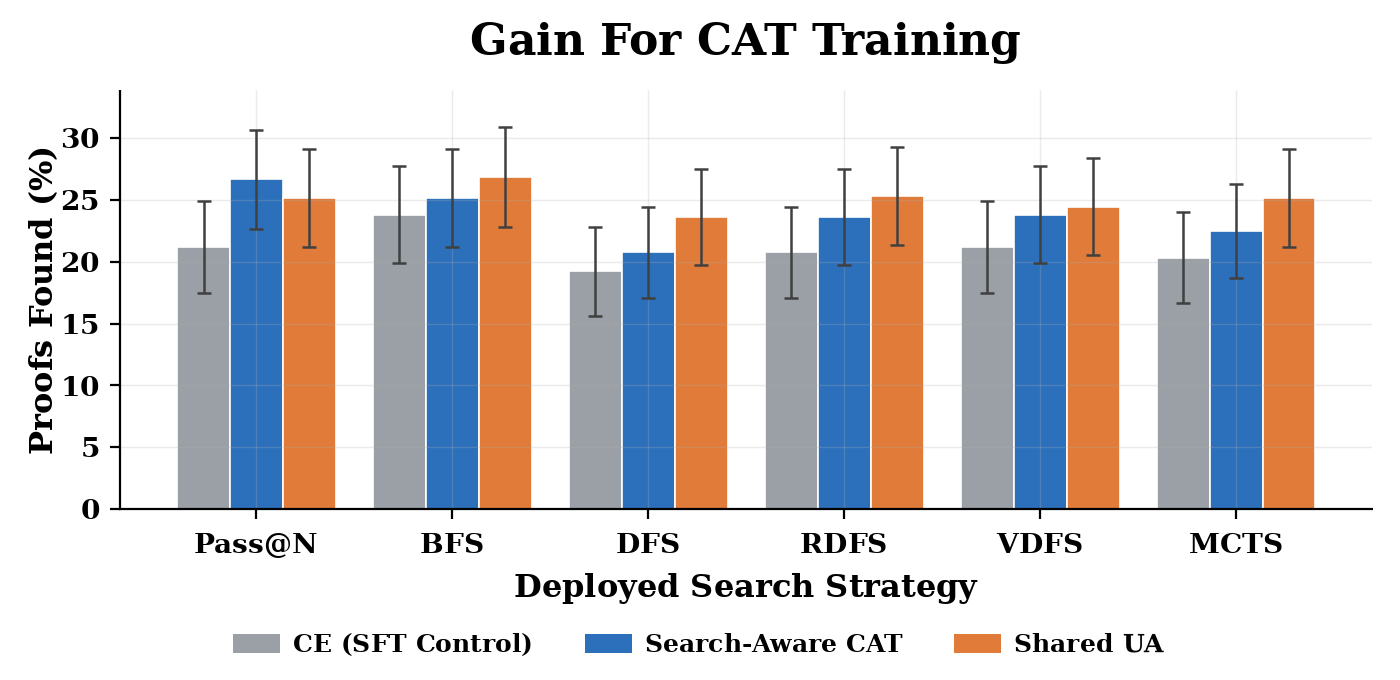}
\caption{\textbf{Both training families improve accuracy over CE.} Proofs found (\%) on $458$ held-out theorems with a budget of $N=256$ tactic expansions. Search-aware CAT uses strategy-matched training objectives. UA uses one adapter across all strategies. UA has the highest accuracy under branching strategies; search-aware CAT leads under Pass@$N$. Error bars show $95\%$ intervals.\looseness-1}

\label{fig:main_comparison}
\vspace{-16pt}
\end{figure}

\paragraph{Results} Both search-aware CAT and UA have higher observed success rates than CE under all six strategies (\Cref{tab:matrix}), showing that accounting for deployment-time compute provides substantial gains over standard SFT. The shared UA objective is a strong search-agnostic approach, attaining the highest point estimate under the five branching strategies. At the same time, search-aware CAT achieves the highest observed success under Pass@$N$ ($26.6\%$ versus $25.1\%$ for UA), showing modeling the deployed search strategy can yield additional improvement. Consistent with this, within the search-aware family, matching training and deployment gives higher point estimates for the Pass@$N$/BFS adapter pair: $26.6\%$ versus $23.7\%$ under Pass@$N$, and $25.1\%$ versus $24.5\%$ under BFS (\Cref{tab:matrix_abl_2}). Together, these results support compute-aware training and suggest that modeling the deployed strategy can provide additional benefits.

\subsection{Experiment 2: Alignment Gain versus Test-Time Budget}
\label{subsec:exp_sweep}

We train a separate search-aware CAT adapter for each strategy (Pass@$N$ or BFS) and budget $N\in\{16,64,256\}$, and evaluate each adapter at its corresponding budget. This sweep removes the per-strategy and per-theorem wall-clock caps, but each model receives at most $80$ hours of evaluation on four A100 GPUs. At each budget, we compare CAT and CE only on theorems for which both completed evaluation; \Cref{tab:sweep} reports the paired subset sizes. $N=1024$ runs completed only $56$--$57$ paired theorems and are reported separately in \Cref{app:sweep_1024}.

\begin{table}[ht]
\centering
\caption{\textbf{Paired search-aware CAT gain over CE.} $n$ is the paired subset size; $b$ and $c$ count CAT-only and CE-only successes. Gains are in percentage points. The $95\%$ intervals use the variance approximation $(b+c)/n^2$ on the probability scale; $p$ is the exact two-sided McNemar test.}
\label{tab:sweep}
\small
\begin{tabular}{llcccc}
\toprule
Strategy & Budget $N$ & $n$ & $b/c$ & Gain (pp), $95\%$ interval & $p$ \\
\midrule
\multirow{3}{*}{Pass@$N$}
 & $16$  & 458 & 10/1  & $+2.0\;[0.5,3.4]$  & $0.012$ \\
 & $64$  & 458 & 28/8  & $+4.4\;[1.8,6.9]$  & $0.001$ \\
 & $256$ & 408 & 29/12 & $+4.2\;[1.1,7.2]$  & $0.012$ \\
\midrule
\multirow{3}{*}{BFS}
 & $16$  & 458 & 28/8  & $+4.4\;[1.8,6.9]$  & $0.001$ \\
 & $64$  & 458 & 34/13 & $+4.6\;[1.7,7.5]$  & $0.003$ \\
 & $256$ & 407 & 37/9  & $+6.9\;[3.6,10.1]$ & $<0.001$ \\
\bottomrule
\end{tabular}
\end{table}

\begin{figure}[ht]
\centering
\includegraphics[width=0.55\linewidth]{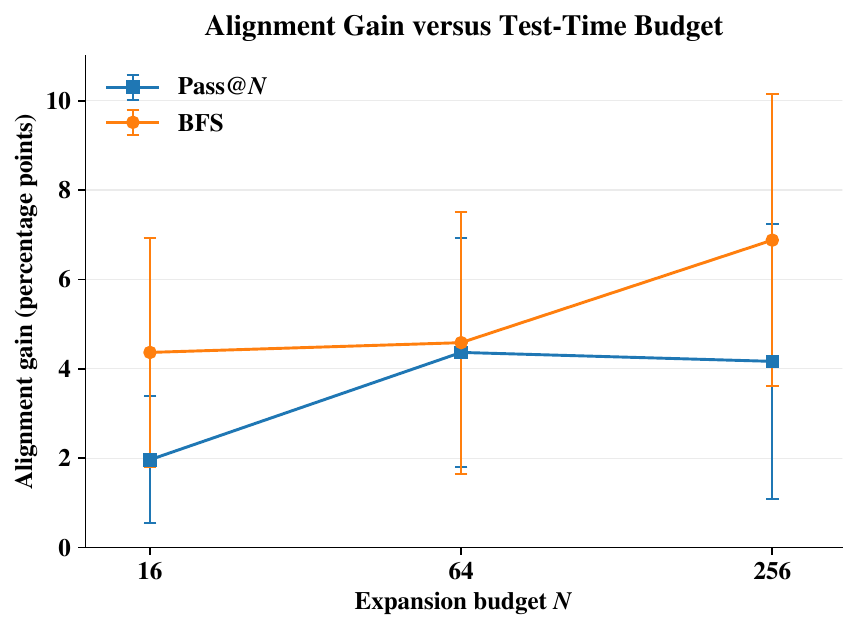}
\caption{Paired search-aware CAT gain over CE with budget-matched adapters. Error bars are $95\%$ intervals. Completed subsets differ across budgets, so connecting lines are descriptive rather than a common-cohort comparison.}
\label{fig:sweep_gain}
\end{figure}

\paragraph{Gains persist across the evaluated budgets.}
Search-aware CAT has positive paired gains at every reported budget in \Cref{tab:sweep}. The observed gains at $N=256$ exceed those at $N=16$ for both Pass@$N$ ($+4.2$ versus $+2.0$ points) and BFS ($+6.9$ versus $+4.4$). However, these are finite-budget performance comparisons, not direct measurements of the weight-error limits in \Cref{sec:main_theory}.

\section{Conclusion}
This paper studies how to train an LLM from demonstrations when it will be deployed within policy-guided search under a finite compute budget. We develop two complementary approaches: search-aware CAT objectives that incorporate a model of the search strategy, and a search-agnostic uniform-allocation objective that accounts for compute without specifying the search rule. Both induce scalar weights on the usual per-tactic CE gradients. Although our experiments focus on ATP, the framework provides a starting point for studying other settings in which generative models guide tree search. Our analysis characterizes how alternative proofs and off-trace exploration affect search-aware gradient weights. It identifies conditions under which their approximation error vanishes as the test-time budget grows while CE remains misaligned. Empirically, all objectives attain higher observed proof-success rates than an epoch-matched CE control across all six evaluated strategies. The shared UA objective performs strongly across search algorithms, while search-aware CAT achieves the highest observed success under Pass@$N$, demonstrating that incorporating strategy-specific structure can, in some cases, provide an additional benefit beyond generic compute awareness. Budget sweeps additionally show larger gains over CE at $256$ than at $16$ expansions for both tested strategies. Together, these findings show that considering the downstream search during training can improve performance.

\paragraph{Limitations and Future Work.}
Our experiments use one model and short reference proofs, with training restricted to offline SFT. UA accounts for compute without modeling the search rule; search-aware CAT models selected search dynamics on the demonstrated trace. Both remain trace-only. Additional training-time compute could sample one-step deviations, explore local trees around demonstrations, and increasingly approximate deployment search. One-step samples reveal alternative transitions; deeper searches provide evidence about alternative proofs and recovery costs. Such observations could refine search-aware objectives beyond fixed allocation. Testing whether these refinements improve performance, and how to allocate training compute between supervision and search, remain future work.
\clearpage

\clearpage

\appendix

\section{Strategy Images}
\begin{figure}[ht]
\centering
\begin{tikzpicture}[level 1/.style={sibling distance=30mm},level 2/.style={sibling distance=15mm},level distance=15mm,state/.style={circle, draw, minimum size=8mm, inner sep=1pt},dead/.style={circle, draw, fill=red!20, minimum size=8mm},qed/.style={circle, draw, fill=green!20, minimum size=8mm, font=\bfseries},edge/.style={->, >=stealth, thick}]
\node[font=\bfseries] at (0, 1.2) {Budget: $N=4$ parallel threads};

\node[state] (root) at (0,0) {$x_0$}
    child { node[state] (s1a) {$x_1$} 
        child { node[dead] (d1) {$\times$} edge from parent[edge] node[left, font=\scriptsize] {$y_{1,a}$} }
        edge from parent[edge] node[left=2pt, font=\scriptsize] {$y_1$}
    }
    child { node[state] (s1b) {$x_1$} 
        child { node[qed] (q1) {QED} edge from parent[edge] node[right, font=\scriptsize] {$y_{1,b}$} }
        edge from parent[edge] node[left=2pt, font=\scriptsize] {$y_1$}
    }
    child { node[dead] (d2) {$\times$} edge from parent[edge] node[right=2pt, font=\scriptsize] {$y_2$} }
    child { node[state] (s2) {$x'_1$}
        child { node[dead] (d3) {$\times$} edge from parent[edge] node[right, font=\scriptsize] {$y_{3,a}$} }
        edge from parent[edge] node[right=2pt, font=\scriptsize] {$y_3$}
    };
    
\end{tikzpicture}

\caption{Pass@N Parallel Search. $N=4$ independent paths explore the tree simultaneously. A single valid path reaching \texttt{QED} constitutes a global success.}
\label{fig:pass_at_n}
\end{figure}
\begin{figure}[ht]
    \centering
    \begin{tikzpicture}[
        level 1/.style={sibling distance=45mm},
        level 2/.style={sibling distance=25mm},
        level distance=18mm,
        state/.style={circle, draw, minimum size=8mm, inner sep=1pt},
        dead/.style={circle, draw, fill=red!20, minimum size=8mm},
        qed/.style={circle, draw, fill=green!20, minimum size=8mm, font=\bfseries},
        edge/.style={->, >=stealth, thick},
        jump/.style={->, >=stealth, thick, dashed, blue}
    ]
    
    \node[font=\bfseries] at (0, 1.2) {Best-First Node Expansions};

    \node[state, label=above:{\scriptsize $S=1.0$}] (root) at (0,0) {$x_0$}
        child { node[state, label=left:{\scriptsize $S=0.8$}] (s1) {$x_1$} 
            child { node[state, label=left:{\scriptsize $S=0.3$}] (s2) {$x_2$} edge from parent[edge] node[left=2pt, font=\scriptsize] {\textcircled{2} $y_{1,a}$} }
            child { node[dead] (d1) {$\times$} edge from parent[edge] node[right=2pt, font=\scriptsize] {\textcircled{3} $y_{1,b}$} }
            edge from parent[edge] node[left=5pt, font=\scriptsize] {\textcircled{1} $y_1$}
        }
        child { node[state, label=right:{\scriptsize $S=0.6$}] (s3) {$x'_1$}
            child { node[qed] (q1) {QED} edge from parent[edge] node[right=2pt, font=\scriptsize] {\textcircled{5} $y_3$} }
            edge from parent[edge] node[right=5pt, font=\scriptsize] {\textcircled{4} $y_2$}
        };

    \draw[jump] (s2.south) to[out=270, in=225, looseness=1.2] 
        node[pos=0.5, below=12pt, font=\scriptsize, align=center] {Global frontier re-evaluation:\\$S(x'_1) > S(x_2)$} 
        (s3.south west);

    \end{tikzpicture}
    \caption{Policy-Guided Best-First Search. The numbered circles indicate chronological node expansions. Expanding $x_0$ yields two valid states on the frontier, $x_1$ ($S=0.8$) and $x'_1$ ($S=0.6$). The algorithm greedily expands $x_1$ (step 2). However, generating the tactic to reach $x_2$ drops the path's geometric mean to $S=0.3$. Because the algorithm evaluates the global frontier, the decision rule abandons the active path and dynamically jumps across the tree to expand $x'_1$ (step 4), ultimately finding the proof.}
    \label{fig:levin_search}
\end{figure}

\begin{figure}[ht]
    \centering
    \begin{tikzpicture}[
        level 1/.style={sibling distance=40mm},
        level 2/.style={sibling distance=30mm},
        level 3/.style={sibling distance=25mm},
        level distance=15mm,
        state/.style={circle, draw, minimum size=8mm, inner sep=1pt},
        dead/.style={circle, draw, fill=red!20, minimum size=8mm},
        qed/.style={circle, draw, fill=green!20, minimum size=8mm, font=\bfseries},
        edge/.style={->, >=stealth, thick}
    ]
    
    \node[font=\bfseries] at (0, 1.2) {Global Budget: $N=5$ sequential expansions};

    \node[state] (root) at (0,0) {$x_0$}
        child { node[state] (s1) {$x_1$} 
            child { node[dead] (d1) {$\times$} edge from parent[edge] node[left=2pt, font=\scriptsize] {\textcircled{2} $y_{1,a}$} }
            child { node[state] (s2) {$x_2$}
                child { node[dead] (d2) {$\times$} edge from parent[edge] node[left=2pt, font=\scriptsize] {\textcircled{4} $y_{2,a}$} }
                child { node[qed] (q1) {QED} edge from parent[edge] node[right=2pt, font=\scriptsize] {\textcircled{5} $y_{2,b}$} }
                edge from parent[edge] node[right=2pt, font=\scriptsize] {\textcircled{3} $y_{1,b}$}
            }
            edge from parent[edge] node[left=5pt, font=\scriptsize] {\textcircled{1} $y_1$}
        };

    \end{tikzpicture}
    \caption{Depth-First Search. The numbered circles indicate the chronological order of node expansions. The algorithm pushes forward, backtracks locally upon failure (e.g., from step 2 back to $x_1$), and terminates the search immediately upon reaching the \texttt{QED} state at step 5.}
    \label{fig:dfs_search}
\end{figure}

\begin{figure}[ht]
\centering
\begin{tikzpicture}[
    level 1/.style={sibling distance=40mm},
    level 2/.style={sibling distance=30mm},
    level 3/.style={sibling distance=20mm},
    level distance=15mm,
    state/.style={circle, draw, minimum size=8mm, inner sep=1pt},
    dead/.style={circle, draw, fill=red!20, minimum size=8mm},
    qed/.style={circle, draw, fill=green!20, minimum size=8mm, font=\bfseries},
    edge/.style={->, >=stealth, thick},
    jump/.style={->, >=stealth, thick, dashed, blue}
]

\node[font=\bfseries] at (0, 1.2) {Randomized Backtracking};

\node[state] (x0) at (0,0) {$x_0$}
    child { node[state] (x1) {$x_1$} 
        child { node[state] (x2) {$x_2$}
            child { node[dead] (d1) {$\times$} edge from parent[edge] node[left=2pt, font=\scriptsize] {\textcircled{3} $y_3$} }
            edge from parent[edge] node[left=2pt, font=\scriptsize] {\textcircled{2} $y_2$}
        }
        child { node[qed] (q1) {QED} edge from parent[edge] node[right=2pt, font=\scriptsize] {\textcircled{5} $y'_{2}$} }
        edge from parent[edge] node[left=2pt, font=\scriptsize] {\textcircled{1} $y_1$}
    };
    
\draw[jump, overlay] (d1.west) to[out=170, in=190, looseness=2.2] 
    node[left=4pt, font=\scriptsize, align=right] {\textcircled{4} Uniform random\\backtrack to $x_1$} 
    (x1.west);

\end{tikzpicture}
\caption{Randomized Depth-First Search. After expanding a deep, dead-end path (steps 1--3), the algorithm selects a node uniformly at random from the active path $\{x_0, x_1, x_2\}$ to backtrack to. In this case, it randomly selects $x_1$ (step 4), escaping the trap at $x_2$ and successfully finding the proof on an alternative branch (step 5).}
\label{fig:dfs_rand}
\end{figure}
\begin{figure}[ht]
    \centering
    \begin{tikzpicture}[
        level 1/.style={sibling distance=40mm},
        level 2/.style={sibling distance=30mm},
        level 3/.style={sibling distance=20mm},
        level distance=18mm,
        state/.style={circle, draw, minimum size=8mm, inner sep=1pt},
        dead/.style={circle, draw, fill=red!20, minimum size=8mm},
        qed/.style={circle, draw, fill=green!20, minimum size=8mm, font=\bfseries},
        edge/.style={->, >=stealth, thick},
        jump/.style={->, >=stealth, thick, dashed, blue}
    ]

    \node[font=\bfseries] at (0, 1.2) {Value-Guided Backtracking};

    \node[state, label=above:{\scriptsize $S=1.0$}] (x0) at (0,0) {$x_0$}
        child { node[state, label=left:{\scriptsize $S=0.8$}] (x1) {$x_1$} 
            child { node[state, label=left:{\scriptsize $S=0.3$}] (x2) {$x_2$}
                child { node[dead] (d1) {$\times$} edge from parent[edge] node[left=2pt, font=\scriptsize] {\textcircled{3} $y_3$} }
                edge from parent[edge] node[left=2pt, font=\scriptsize] {\textcircled{2} $y_2$}
            }
            child { node[qed] (q1) {QED} edge from parent[edge] node[right=2pt, font=\scriptsize] {\textcircled{5} $y'_{2}$} }
            edge from parent[edge] node[left=2pt, font=\scriptsize] {\textcircled{1} $y_1$}
        };
        
    \draw[jump, overlay] (d1.west) to[out=170, in=190, looseness=2.2] 
        node[left=4pt, font=\scriptsize, align=right] {\textcircled{4} Score-weighted random\\backtrack to $x_1$\\($P \propto 0.8$)} 
        (x1.west);

    \end{tikzpicture}
    \caption{Value Guided Depth-First Search (VDFS). After expanding into a dead end (step 3), the algorithm evaluates the active path prefix $\{x_0, x_1, x_2\}$. Instead of backtracking uniformly, it samples an ancestor proportional to its Levin score $S$. In this case, it randomly jumps to $x_1$ (step 4) because its higher score ($S=0.8$) makes it a heavily favored target over the deep trap at $x_2$ ($S=0.3$). This allows it to successfully escape and find the proof on an alternative branch (step 5).}
    \label{fig:VDFS_search}
\end{figure}

\begin{figure}[ht]
    \centering
    \begin{tikzpicture}[
        level 1/.style={sibling distance=48mm},
        level 2/.style={sibling distance=26mm},
        level distance=17mm,
        state/.style={circle, draw, minimum size=8mm, inner sep=1pt},
        dead/.style={circle, draw, fill=red!20, minimum size=8mm},
        qed/.style={circle, draw, fill=green!20, minimum size=8mm, font=\bfseries},
        edge/.style={->, >=stealth, thick},
        jump/.style={->, >=stealth, thick, dashed, blue},
        backup/.style={->, >=stealth, thick, dashed, gray}
    ]

    \node[font=\bfseries] at (0, 1.3) {MCTS};

    \node[state] (x0) at (0,0) {$x_0$}
        child { node[state, label=above:{\scriptsize $V{=}0.80$}] (x1) {$x_1$}
            child { node[dead] (x2) {$\times$}
                edge from parent[edge]
                node[left=2pt, font=\scriptsize] {\textcircled{3} $y_{1,a}$} }
            edge from parent[edge]
            node[left=5pt, font=\scriptsize] {\textcircled{1} $y_1$} }
        child { node[state, label=above:{\scriptsize $V{=}0.60$}] (x1p) {$x_1'$}
            child { node[qed] (qed) {QED}
                edge from parent[edge]
                node[right=2pt, font=\scriptsize] {\textcircled{4} $y_{2,b}$} }
            edge from parent[edge]
            node[right=5pt, font=\scriptsize] {\textcircled{2} $y_2$} };

    \draw[backup, overlay] (x2.south) to[out=250, in=280, looseness=1.6]
        node[below=2pt, font=\scriptsize, align=center]
        {backup: $Q(x_1)\!\leftarrow\!0.40,\ n(x_1)\!=\!2$}
        (x1.south);

    \draw[jump, overlay] (x1.east) to[out=-20, in=200, looseness=0.9]
        node[pos=0.5, below=4pt, font=\scriptsize, align=center]
        {UCT re-evaluation:\\$\mathrm{UCT}(x_1')=1.12 > \mathrm{UCT}(x_1)=0.77$}
        (x1p.west);

    \end{tikzpicture}
    \caption{Deployed Monte Carlo Tree Search. Circled numbers indicate the
    chronological order of node expansions; a budget of $N=4$ expansions is
    exhausted in this run. \emph{Selection/Expansion:} the root is expanded
    twice, sampling $y_1$ into $x_1$ (step~1) and $y_2$ into $x_1'$ (step~2);
    each is evaluated once via the policy's own Levin score, giving
    $V(x_1)=0.80$ and $V(x_1')=0.60$. With both children visited once, UCT
    (exploration constant $c=0.5$) scores $x_1$ at $1.22$ against $x_1'$ at
    $1.02$, so the search commits to the higher-value child and expands it
    (step~3), reaching a dead end. \emph{Backup:} the failure backs up a value
    of $0$ through $x_1$, dropping its running average to $Q(x_1)=0.40$.
    \emph{Selection reverses:} recomputing UCT with the updated statistics
    gives $x_1'$ a score of $1.12$ against $x_1$'s now-diminished $0.77$ --
    unlike Policy-Guided Best-First Search (\Cref{fig:levin_search}), where the
    frontier is re-ranked by a static score, here it is the shrinking
    exploration bonus \emph{and} the value drop from a real backup that
    overturn the earlier choice. The search abandons $x_1$'s subtree, expands
    $x_1'$ (step~4), and reaches \texttt{QED}.}
    \label{fig:mcts_search}
\end{figure}

\clearpage
\section{Formal MDP Formulation of Automated Theorem Proving}
\label{app:mdp_formulation}

We formalize the structure of LLM-guided theorem proving in interactive environments like Lean, where text generation is strictly mediated by a formal verifier (the Lean kernel), as a Markov Decision Process (MDP) \citep{polu2020generativelanguagemodelingautomated, lample2022hypertree, yang2023leancopilot}. We define this sequential decision-making process as a tuple $\mathcal{M} = \langle \mathcal{X}, \mathcal{Y}, \mathcal{T}, \mathcal{R}, x_0 \rangle$:
\begin{itemize}
    \item \textbf{State Space ($\mathcal{X}$):} The discrete set of all valid Lean proof states. This space includes two terminal absorbing states: a universal success state $x_{QED}$ and a failure state $x_{err}$.
    \item \textbf{Action Space ($\mathcal{Y}$):} The space of possible tactics the LLM can generate.
    \item \textbf{Transition Function ($\mathcal{T}$):} The deterministic update rules governed by the Lean kernel, defined as $\mathcal{T}: \mathcal{X} \times \mathcal{Y} \to \mathcal{X}$. Given a state $x_{t-1}$ and tactic $y_t$, $\mathcal{T}(x_{t-1}, y_t) = x_t$ if the tactic is logically and syntactically valid. If $y_t$ is invalid, the kernel immediately rejects it, and $\mathcal{T}(x_{t-1}, y_t) = x_{err}$. If $y_t$ completes the proof, $\mathcal{T}(x_{t-1}, y_t) = x_{QED}$.
    \item \textbf{Reward Function ($\mathcal{R}$):} A sparse terminal reward function mapping to $\{0, 1\}$. $\mathcal{R}(x_{t-1}, y_t) = 1$ if $\mathcal{T}(x_{t-1}, y_t) = x_{QED}$, and $0$ otherwise.
    \item \textbf{Initial State ($x_0$):} The underlying theorem declaration to be proven.
\end{itemize}

The probability of executing any valid trajectory of length $K$, denoted $\tau = (x_0, y_1, x_1, \dots, y_K, x_K)$, is the product of the model's autoregressive action probabilities:
\begin{equation}
    P_{\pi_\theta}(\tau \mid x_0) = \prod_{t=1}^{K} \pi_\theta(y_t \mid x_{t-1})
\end{equation}

Let $\Gamma_{succ}(x_0)$ denote the intractable set of all valid, successful trajectories originating from $x_0$ and terminating in $x_{QED}$. The true probability of proof discovery without search (Pass@1) is the marginal likelihood over this entire set:
\begin{equation}
    P_{\mathrm{any}} := \sum_{\tau \in \Gamma_{succ}(x_0)} P_{\pi_\theta}(\tau \mid x_0)
\end{equation}
Because marginalizing over $\Gamma_{succ}(x_0)$ is computationally impossible, standard SFT sidesteps this global objective, reducing the objective to maximizing the likelihood of a single optimal trace $\tau^*$ via behavioral cloning, which yields the standard cross-entropy loss discussed in Section \ref{sec:formal_cat}.
\clearpage

\section{Approximation Error and Scaling}
\label{sec:scaling}

This appendix develops the approximation analysis for the \emph{search-aware} objectives introduced in \Cref{sec:strategies}. These objectives replace the deployed success probability $J^{\mathcal A}$ with a trace-supported proxy $\widehat J_{\mathcal A}$. We study the resulting approximation through the per-tactic gradient weights, asking two questions: (i) at a fixed budget, when is the search-aware weight closer to the deployed weight than CE's constant unit weight? (ii) how does this approximation error change as the test-time budget grows? The search-agnostic UA objective is a separate budget-based construction, discussed in \Cref{app:meanfield}, and need not approximate the dynamics of a particular deployed search.

We study Pass@$N$ (\Cref{subsec:passn_analysis}) and the shared-budget retry model used for best-first search (\Cref{subsec:bfs_analysis}). For Pass@$N$, the budget counts independent rollouts, allowing us to isolate the effect of alternative successful trajectories omitted by the demonstrated trace. For the retry model, we examine how unsuccessful off-trace excursions consume a shared expansion budget. The latter results rely on explicit assumptions about excursion costs and recovery, rather than providing a complete characterization of deployed best-first search.

The analysis separates three effects:
\begin{itemize}
    \item \textbf{Bypass channel.} For Pass@$N$, under the proportional-reallocation convention and \Cref{assum:single_visit}, the deployed weight is no larger than the trace-supported weight, which is no larger than CE's unit weight (\Cref{thm:passn_squeeze}).

    \item \textbf{Trap channel.} In a trap-only retry model with a common deterministic miss cost, excursions that return to the same trace state reduce the effective budget. The resulting weight lies between the retry-surrogate weight and CE's unit weight (\Cref{prop:shared_budget_bracketing_revised}).

    \item \textbf{Budget scaling.} In the models studied here, large-budget weight error vanishes for Pass@$N$ and for recoverable shared-budget excursions. Permanent absorption can instead leave a nonzero error floor (\Cref{subsec:hump_floor}).
\end{itemize}

These results identify off-trace quantities that demonstrations alone do not determine. Incorporating them requires additional modeling assumptions or observations of search behavior. \Cref{app:offtrace} explores a simple use of such assumptions through priors on off-trace cost and recovery.

\subsection{Well-posed deployed weights}
\label{subsec:scaling_elasticity}

The true objective depends on the entire policy, not only on the demonstrated probabilities $p=(p_1,\ldots,p_L)$, so "$\partial J/\partial p_t$" is meaningless until we fix how the remaining probability mass moves when $p_t$ does. We adopt the assumption of \emph{proportional reallocation}: at each demonstrated state, write
\begin{equation}
    \pi_\theta(y\mid x_{t-1}) = p_t\,\mathbf{1}\{y=y_t^*\} + (1-p_t)\,\rho_t(y),
    \qquad \rho_t \in \Delta\big(\mathcal Y\setminus\{y_t^*\}\big),
    \label{eq:reparam}
\end{equation}
and define derivatives in $p_t$ holding $\rho_t$, all other $p_s$, and all conditionals at non-demonstrated states fixed. This is not explicitly true, tactics similar to the demonstrated tactic will gain probability mass while distant ones will lose mass; however, this convention makes analysis feasible. This is strictly a convention and not a statement about how changing the gradient redistributes mass to unobserved tactics. All statements about the true weights below are relative to this convention. For any success function $J>0$ define the \emph{elasticity}
\begin{equation}
    w_t(J) := \frac{\partial \log J}{\partial \log p_t}
    = \frac{p_t}{J}\frac{\partial J}{\partial p_t},
    \label{eq:search_elasticity}
\end{equation}
so that $w_t^\star := w_t(J^{\mathcal A})$ is the (unknown) deployed weight, $\widehat w_t := w_t(\widehat J_{\mathcal A})$ is the CAT weight of \Cref{prop:cat_template}, and CE corresponds to $w_t^{\mathrm{CE}}\equiv 1$. Comparing training rules then reduces to a scalar question at each step.

\begin{proposition}[When is CAT closer than CE?]
\label{prop:cat_vs_sft_scalar}
Suppose $0\le \widehat w_t<1$. Then $\big|w_t^\star-\widehat w_t\big|\le \big|w_t^\star-1\big|$ if and only if $w_t^\star \le \tfrac{1}{2}(1+\widehat w_t)$. If $\widehat w_t=1$, CAT and CE coincide.
\end{proposition}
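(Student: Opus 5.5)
This is a scalar comparison on the real line. The plan is to square both sides, or equivalently to compare distances to the two points $\widehat w_t$ and $1$. Write $a:=\widehat w_t\in[0,1)$ and $w:=w_t^\star$. The claim is that $|w-a|\le|w-1|$ holds exactly when $w$ lies on the side of the midpoint $\tfrac12(1+a)$ that contains $a$. Since both sides of the inequality are nonnegative, it is equivalent to $(w-a)^2\le(w-1)^2$.

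The steps are as follows.

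First, expand $(w-a)^2\le (w-1)^2$. The $w^2$ terms cancel, leaving $-2aw+a^2\le -2w+1$.

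Second, rearrange this to $2w(1-a)\le 1-a^2=(1-a)(1+a)$.

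Third, divide by $2(1-a)$. This is strictly positive because $a<1$, so the inequality direction is preserved and we obtain $w\le\tfrac12(1+a)$. Every step is an equivalence, so both directions of the ``if and only if'' follow at once.

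For the degenerate case, if $\widehat w_t=1$, then the CAT weight equals CE's weight $w_t^{\mathrm{CE}}\equiv1$. The two training rules therefore assign the same weight at step $t$, and their gradients coincide termwise by \Cref{prop:cat_template}(ii).

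The only delicate point is the division in the third step. It requires $a<1$ strictly, which is why the hypothesis excludes $\widehat w_t=1$ and treats that case separately. The assumption $a\ge0$ is not actually needed for the equivalence; it only reflects that CAT weights are elasticities of an increasing success function. No property of $w_t^\star$ beyond being a real number (well defined under \Cref{eq:reparam}) is used.
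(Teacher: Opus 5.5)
Your proof is correct and matches the paper's argument. The paper also squares both sides, cancels $(w_t^\star)^2$, and divides by the positive quantity $1-\widehat w_t$.
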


\begin{proof}
Square both sides, cancel $(w_t^\star)^2$, and divide by $1-\widehat w_t>0$.
\end{proof}

Thus, CAT wins exactly when the deployed search makes the demonstrated tactic less critical than single-shot generation assumes; CE can win only when the tactic remains a near single-shot bottleneck under deployment. The next two subsections determine which side of the crossover each strategy occupies.

\subsection{Pass@\texorpdfstring{$N$}{N}: unit elasticity and a one-sided squeeze}
\label{subsec:passn_analysis}

Let $a(p)=\prod_{i=1}^{L}p_i$ be the one-rollout probability of the demonstrated trace, $b(p)$ the one-rollout probability of proving the theorem by any \emph{other} successful trajectory, and $P_{\mathrm{any}}=a+b$.\footnote{Note that $\pi_{\mathrm{proof}}$ in \Cref{subsec:passn} equals $a$.} With $F(x)=1-(1-x)^N$, the deployed objective is $J^{\mathrm{true}}=F(P_{\mathrm{any}})$ and the surrogate is $\widehat J = F(a)$. Define the elasticity of the outer function,
\begin{equation}
    H_F(x)=\frac{xF'(x)}{F(x)}.
\end{equation}

\begin{lemma}[Properties of $H_F$]
\label{lem:HF}
For $F(x)=1-(1-x)^N$ and $x\in(0,1)$,
\begin{equation}
    H_F(x)=\frac{N}{\sum_{j=0}^{N-1}(1-x)^{-j}},
\end{equation}
so $H_F$ is nonincreasing, $H_F(x)\le 1$ with $H_F(0^+)=1$, and for fixed $x$, $H_F(x)\to 0$ as $N\to\infty$.
\end{lemma}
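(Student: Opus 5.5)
The plan is to write $u=1-x\in(0,1)$ and compute $H_F$ directly. Since $F'(x)=N(1-x)^{N-1}=Nu^{N-1}$, and the geometric-sum identity gives $F(x)=1-u^N=(1-u)\sum_{j=0}^{N-1}u^j=x\sum_{j=0}^{N-1}u^j$, we obtain
\[
H_F(x)=\frac{xF'(x)}{F(x)}=\frac{N u^{N-1}}{\sum_{j=0}^{N-1}u^{j}} .
\]
Dividing numerator and denominator by $u^{N-1}$ and reindexing $j\mapsto N-1-j$ turns the denominator into $\sum_{j=0}^{N-1}u^{-j}=\sum_{j=0}^{N-1}(1-x)^{-j}$. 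This is the claimed closed form.

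The qualitative properties then all follow from this form. For monotonicity, each term $(1-x)^{-j}$ with $j\ge 1$ is strictly increasing in $x$ on $(0,1)$, and the $j=0$ term is constant. So the denominator is nondecreasing (strictly increasing once $N\ge 2$), and $H_F$ is nonincreasing. For the bound $H_F\le 1$, each term satisfies $(1-x)^{-j}\ge 1$, so the denominator is at least $N$. As $x\to 0^+$, every term tends to $1$, the denominator tends to $N$, and hence $H_F(0^+)=1$.

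For the large-$N$ limit with fixed $x\in(0,1)$, set $r=(1-x)^{-1}>1$. The denominator is $(r^N-1)/(r-1)$, so
\[
H_F(x)=\frac{N(r-1)}{r^N-1}.
\]
This tends to $0$ because $r^N$ grows exponentially while $N$ grows only linearly. Equivalently, from the first form, $H_F(x)=Nu^{N-1}x/(1-u^N)\le N u^{N-1}x/(1-u)$, which tends to $0$ since $Nu^{N-1}\to 0$ for $u<1$.

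There is no real obstacle here. The only step needing care is the algebraic reindexing that converts $u^{N-1}/\sum u^j$ into $1/\sum u^{-j}$. The degenerate case $N=1$, where $H_F\equiv 1$, should also be noted as consistent with "nonincreasing".
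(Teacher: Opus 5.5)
Your proposal is correct and follows essentially the same route as the paper: the geometric-sum factorization $F(x)=x\sum_{k=0}^{N-1}(1-x)^k$ together with $F'(x)=N(1-x)^{N-1}$, a reindexing by $N-1$, and then termwise arguments on $(1-x)^{-j}$ for the bound, monotonicity, and $0^+$ limit, with geometric growth of the sum giving $H_F\to 0$. Your explicit form $N(r-1)/(r^N-1)$ and the alternative bound $H_F\le N(1-x)^{N-1}$ simply spell out the paper's one-line ``diverges geometrically'' step.
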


\begin{proof}
$F(x)=x\sum_{k=0}^{N-1}(1-x)^{k}$ and $F'(x)=N(1-x)^{N-1}$; divide and shift the index by $N-1$. Each summand $(1-x)^{-j}\ge 1$ and is nondecreasing in $x$, giving the bound and monotonicity; the sum diverges geometrically in $N$ for fixed $x$, giving the limit.
\end{proof}

\begin{assumption}[At most one visit]
\label{assum:single_visit}
Almost surely, every successful one-rollout trajectory visits each demonstrated proof state $x_{t-1}$ at most once. This holds automatically when states are history-augmented (the search tree), which is the case in our setting.
\end{assumption}

\begin{lemma}[Unit elasticity]
\label{lem:unit_elasticity}
Under \Cref{eq:reparam} and \Cref{assum:single_visit}, partition the non-demonstrated success mass at step $t$ as $b = b_t^{\mathrm{sh}}+b_t^{\mathrm{sib}}+b_t^{\varnothing}$: trajectories that traverse the demonstrated transition $(x_{t-1},y_t^*)$ but are not $\tau^*$; trajectories that visit $x_{t-1}$ and deviate there; and trajectories that never visit $x_{t-1}$. Then
\begin{equation}
    p_t\frac{\partial P_{\mathrm{any}}}{\partial p_t}
    = \big(a+b_t^{\mathrm{sh}}\big) - \frac{p_t}{1-p_t}\,b_t^{\mathrm{sib}}
    \;\le\; P_{\mathrm{any}},
    \qquad\text{i.e.}\qquad
    \eta_t := \frac{p_t\,\partial_{p_t}P_{\mathrm{any}}}{P_{\mathrm{any}}} \le 1.
    \label{eq:unit_elasticity}
\end{equation}
No demonstrated tactic can move the total success probability excessively. ($\eta_t$ may be negative when sibling mass dominates.)
\end{lemma}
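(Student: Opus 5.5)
The plan is to write $P_{\mathrm{any}}$ as a sum over successful one-rollout trajectories and track how each trajectory's probability depends on $p_t$ under the reparametrization \Cref{eq:reparam}. By \Cref{assum:single_visit}, each successful trajectory $\tau$ visits $x_{t-1}$ at most once, so its probability is a product of policy conditionals in which at most one factor comes from $x_{t-1}$. Every other factor is held fixed by the differentiation convention, since all other $p_s$, all $\rho_s$, and all off-trace conditionals are frozen.

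The first step is to classify trajectories by that one factor, which gives the partition in the statement.

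\begin{itemize}
\item \textbf{Never visit $x_{t-1}$.} The probability does not depend on $p_t$, so these trajectories contribute $b_t^{\varnothing}$ and have zero derivative.
\item \textbf{Visit $x_{t-1}$ and take $y_t^*$.} This class consists of $\tau^*$ itself, with mass $a$, and the shared-prefix alternatives, with mass $b_t^{\mathrm{sh}}$. Each such probability is linear in $p_t$ with no other $p_t$-dependence, so $p_t\partial_{p_t}$ returns the probability itself. The total contribution is $a+b_t^{\mathrm{sh}}$.
\item \textbf{Visit $x_{t-1}$ and deviate.} The factor at $x_{t-1}$ is $(1-p_t)\rho_t(y)$, so $p_t\partial_{p_t}$ multiplies the probability by $-p_t/(1-p_t)$. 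The total contribution is $-\tfrac{p_t}{1-p_t}b_t^{\mathrm{sib}}$.
\end{itemize}

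Summing the three classes by linearity (termwise differentiation is fine: the series has nonnegative terms, each linear in $p_t$ or $1-p_t$, so it is a power series of degree one in $p_t$) gives the identity in \Cref{eq:unit_elasticity}.

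The inequality then follows by comparing with $P_{\mathrm{any}}=a+b_t^{\mathrm{sh}}+b_t^{\mathrm{sib}}+b_t^{\varnothing}$:
\[
P_{\mathrm{any}}-p_t\partial_{p_t}P_{\mathrm{any}}=b_t^{\mathrm{sib}}\Big(1+\tfrac{p_t}{1-p_t}\Big)+b_t^{\varnothing}=\tfrac{b_t^{\mathrm{sib}}}{1-p_t}+b_t^{\varnothing}\ge 0.
\]
Dividing by $P_{\mathrm{any}}\ge a>0$ (since $p_s\in(0,1)$) gives $\eta_t\le 1$. The identity also shows that $\eta_t<0$ exactly when $\tfrac{p_t}{1-p_t}b_t^{\mathrm{sib}}>a+b_t^{\mathrm{sh}}$.

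The main obstacle is justifying that each trajectory's dependence on $p_t$ is exactly a single linear factor. This is where \Cref{assum:single_visit} is essential. Without it, a trajectory revisiting $x_{t-1}$ $k$ times with $m$ demonstrated choices would carry a factor $p_t^m(1-p_t)^{k-m}$, and its elasticity $m-(k-m)p_t/(1-p_t)$ could exceed one. I would state this explicitly and note that history-augmented states make the assumption automatic. I would also note that conditionals at non-demonstrated states, including states reached after deviating, are fixed by convention, so no hidden $p_t$-dependence enters downstream.
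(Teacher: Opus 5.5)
Your proposal is correct and follows the paper's proof. Both use the single-visit assumption to write each successful trajectory's probability as $p_t C$, $(1-p_t)\rho_t(y)C$, or $C$ with $C$ free of $p_t$, differentiate termwise, and get the inequality because the first term is at most $P_{\mathrm{any}}$ and the sibling term is nonpositive; your computation $P_{\mathrm{any}}-p_t\partial_{p_t}P_{\mathrm{any}}=b_t^{\mathrm{sib}}/(1-p_t)+b_t^{\varnothing}\ge 0$ and your note that the sum is affine in $p_t$ just spell out those same steps in more detail.
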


\begin{proof}
By \Cref{assum:single_visit} each successful trajectory contains the conditional at $x_{t-1}$ at most once. Under \Cref{eq:reparam} its probability is therefore of the form $p_t C$, $(1-p_t)\rho_t(y)C$, or $C$ with $C$ free of $p_t$, according to which subset of $b$ it belongs. Differentiate termwise and multiply by $p_t$. The inequality follows since the first term is at most $P_{\mathrm{any}}$ and the second is nonpositive.
\end{proof}

\begin{theorem}[Weight squeeze for Pass@$N$]
\label{thm:passn_squeeze}
Under \Cref{assum:single_visit}, for every step $t$ and budget $N$,
\begin{equation}
    w_t^\star \;=\; \eta_t\,H_F(P_{\mathrm{any}})
    \;\le\; H_F(P_{\mathrm{any}})
    \;\le\; H_F(a) \;=\; \widehat w_t \;\le\; 1 \;=\; w_t^{\mathrm{CE}}.
    \label{eq:passn_squeeze}
\end{equation}
Consequently $|\widehat w_t - w_t^\star| \le |1-w_t^\star|$: CAT weakly dominates CE at every step and every budget, with strict domination whenever any inequality is strict.
\end{theorem}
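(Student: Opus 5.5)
The argument is a chain-rule computation followed by two earlier lemmas. The true objective is $J^{\mathrm{true}}=F(P_{\mathrm{any}})$ with $F(x)=1-(1-x)^N$, and under \Cref{eq:reparam} only the argument $P_{\mathrm{any}}$ depends on $p_t$. Differentiating and multiplying by $p_t/J^{\mathrm{true}}$ gives
\[
w_t^\star=\frac{p_t F'(P_{\mathrm{any}})\,\partial_{p_t}P_{\mathrm{any}}}{F(P_{\mathrm{any}})}
=\frac{P_{\mathrm{any}}F'(P_{\mathrm{any}})}{F(P_{\mathrm{any}})}\cdot\frac{p_t\,\partial_{p_t}P_{\mathrm{any}}}{P_{\mathrm{any}}}
=H_F(P_{\mathrm{any}})\,\eta_t ,
\]
which is the first equality. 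The same computation applies to the surrogate $\widehat J=F(a)$ with $a=\prod_i p_i$. Here $p_t\partial_{p_t}a/a=1$ exactly, so $\widehat w_t=H_F(a)$, which is the equality on the right. This is consistent with the closed form in \Cref{subsec:passn}.

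Next come the inequalities, in order.
\begin{enumerate}
\item $\eta_t H_F(P_{\mathrm{any}})\le H_F(P_{\mathrm{any}})$. This follows from $\eta_t\le1$ (\Cref{lem:unit_elasticity}, which uses \Cref{assum:single_visit}) together with $H_F\ge0$. If $\eta_t<0$ the inequality holds trivially.
\item $H_F(P_{\mathrm{any}})\le H_F(a)$. Since $P_{\mathrm{any}}=a+b\ge a$, this follows because $H_F$ is nonincreasing (\Cref{lem:HF}).
\item $H_F(a)\le1$. This is also part of \Cref{lem:HF}.
\end{enumerate}
The degenerate cases need a brief note. I will assume $p_s\in(0,1)$, so $a\in(0,1)$. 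If $P_{\mathrm{any}}=1$, then $F'=0$ there (for $N>1$) and the weight is $0$, so the chain still holds, although \Cref{lem:HF} is stated only on the open interval and this endpoint has to be checked separately.

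The consequence is short. The chain gives $w_t^\star\le\widehat w_t\le1$, so $1-w_t^\star=(1-\widehat w_t)+(\widehat w_t-w_t^\star)$ is a sum of two nonnegative terms. Hence $|\widehat w_t-w_t^\star|=\widehat w_t-w_t^\star\le 1-w_t^\star$, with equality only when $\widehat w_t=1$. Strictness therefore comes from the last inequality. Alternatively, \Cref{prop:cat_vs_sft_scalar} can be invoked, since $w_t^\star\le\widehat w_t\le\frac12(1+\widehat w_t)$.

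The step needing the most care is the strictness claim: it holds exactly when $\widehat w_t<1$, i.e.\ $H_F(a)<1$. For this I would check that $H_F(x)=N/\sum_{j<N}(1-x)^{-j}<1$ whenever $N>1$ and $x\in(0,1)$, which holds because every term with $j\ge1$ exceeds $1$. I would also verify that the interchange of differentiation with the possibly infinite sum defining $P_{\mathrm{any}}$ inside \Cref{lem:unit_elasticity} is legitimate. Termwise derivatives are bounded by the terms times $1/p_t$ or $1/(1-p_t)$, so dominated convergence should suffice.
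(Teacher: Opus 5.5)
Your proposal is correct and is essentially the paper's proof: the chain rule gives $w_t^\star=\eta_t H_F(P_{\mathrm{any}})$ and $\widehat w_t=H_F(a)$, and then three facts give the chain, namely $\eta_t\le 1$ from \Cref{lem:unit_elasticity}, monotonicity of $H_F$ together with $a\le P_{\mathrm{any}}$, and $H_F\le 1$ from \Cref{lem:HF}; domination then follows from $w_t^\star\le\widehat w_t\le 1$. The paper's proof does not address strictness, and your treatment of it is the accurate one: domination is strict exactly when $\widehat w_t<1$, i.e.\ when $N>1$, so the statement's clause ``whenever any inequality is strict'' actually fails at $N=1$ with $\eta_t<1$, where the first inequality is strict but CAT and CE coincide and hence $|\widehat w_t-w_t^\star|=|1-w_t^\star|$.
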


\begin{proof}
$w_t^\star = H_F(P_{\mathrm{any}})\,\eta_t$ and $\widehat w_t = H_F(a)\cdot 1$ by the chain rule and $p_t\partial_{p_t}a = a$. Apply \Cref{lem:unit_elasticity}, then $a\le P_{\mathrm{any}}$ with $H_F$ nonincreasing, then $H_F\le 1$ (\Cref{lem:HF}). Domination: $w_t^\star\le\widehat w_t\le 1$ gives $\widehat w_t-w_t^\star\le 1-w_t^\star$.
\end{proof}

\paragraph{Error decomposition.} The gap localizes into the two ways the surrogate is blind:
\begin{equation}
    \widehat w_t - w_t^\star
    = \underbrace{\big[H_F(a)-H_F(P_{\mathrm{any}})\big]}_{\text{ignored bypass mass}}
    \;+\; \underbrace{(1-\eta_t)\,H_F(P_{\mathrm{any}})}_{\text{sharing / reallocation structure}}.
    \label{eq:passn_error_decomp}
\end{equation}
The first term is the objective-level blindness to $b$; the second is present even at $N=1$ and reflects that raising $p_t$ proportionally starves sibling alternatives at $x_{t-1}$ while riding along with shared-prefix alternatives.

\paragraph{Remark.} A useful strengthening is assuming $b\le\varepsilon a$, the mean value theorem bounds the first term of \Cref{eq:passn_error_decomp} by $\varepsilon a \sup_{x\in[a,a+b]}|H_F'(x)|$, giving a quantitative guarantee. This assumption is in a sense that the expert trace takes up a nontrivial chunk of the probability mass of correct proofs.

\subsection{Best-first search: censoring and excursions}
\label{subsec:bfs_analysis}

For BFS the surrogate is the retry objective of \Cref{eq:bfs_trace_objective},
\begin{equation}
    \widehat J(N)=\mathbb P(S_L\le N),
    \qquad S_L=\sum_{t=1}^{L}T_t,
    \qquad T_t\sim\mathrm{Geom}(p_t)\ \text{independent}.
    \label{eq:retry_surrogate_again}
\end{equation}
Unlike Pass@$N$, a deployed miss does not cost exactly one budget unit. The search may wander off-trace before the trace node is re-selected, or it may find a proof out there without returning. We separate these two events, then approximate each.

\subsubsection{The CAT weight is a criticality}

\begin{lemma}[CAT weight as criticality]
\label{lem:criticality_revised}
Let $T_t'$ be an independent copy of $T_t$ and $p_i\in(0,1)$. For \Cref{eq:retry_surrogate_again},
\begin{equation}
    \widehat w_t(N)
    = 1-\frac{\mathbb P(S_L+T_t'\le N)}{\mathbb P(S_L\le N)}
    = \mathbb P\!\left(S_L\le N < S_L+T_t' \,\middle|\, S_L\le N\right).
    \label{eq:criticality_revised}
\end{equation}
Hence $0\le\widehat w_t(N)\le 1$, $\widehat w_t(L)=1$, $\widehat w_t(N)\to 0$ as $N\to\infty$, and $\widehat w_t(N)$ is non-increasing in $N$.
\end{lemma}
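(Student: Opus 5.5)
The plan is to reduce everything to a one-variable identity for a single geometric variable, and then average over the remaining steps.

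\emph{Step 1 (single-step identity).} Fix $t$ and let $R=\sum_{i\ne t}T_i$. Then $R$ is independent of $T_t$, and its law does not depend on $p_t$. Write $G_p(m)=\mathbb P(T_t\le m)=1-(1-p_t)^{m}$ for integers $m\ge 0$, with $G_p(m)=0$ for $m<0$. Conditioning on $R$ gives $\widehat J(N)=\mathbb E[G_p(N-R)]$. I will verify directly that
\[
p\,\partial_p G_p(m)=m p(1-p)^{m-1}=G_p(m)-\mathbb P(T+T'\le m).
\]
To see this, expand
\[
\mathbb P(T+T'\le m)=\sum_{k=1}^{m}p(1-p)^{k-1}\bigl(1-(1-p)^{m-k}\bigr)=G_p(m)-mp(1-p)^{m-1}.
\]
This is a routine geometric-sum computation, and for $m\le 0$ both sides are zero.

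\emph{Step 2 (the formula).} Differentiate under the finite expectation over $R$, which is legitimate because $R\ge L-1$ and only finitely many values of $N-R$ contribute. This gives
\[
p_t\partial_{p_t}\widehat J(N)=\mathbb P(S_L\le N)-\mathbb P(S_L+T_t'\le N).
\]
Dividing by $\widehat J(N)$ yields the first expression in \Cref{eq:criticality_revised}. Since $\{S_L+T_t'\le N\}\subseteq\{S_L\le N\}$, the difference is the probability of $\{S_L\le N<S_L+T_t'\}$, which gives the conditional form. The bounds $0\le\widehat w_t\le 1$ are then immediate.

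\emph{Step 3 (endpoint values).} At $N=L$, every $T_i\ge 1$ forces $S_L+T_t'\ge L+1>N$, so the ratio vanishes and $\widehat w_t(L)=1$. As $N\to\infty$, both probabilities tend to $1$ because the variables are a.s. finite, so $\widehat w_t(N)\to 0$.

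\emph{Step 4 (monotonicity), the main obstacle.} I need the ratio $r(N)=\mathbb P(S_L+T_t'\le N)/\mathbb P(S_L\le N)$ to be nondecreasing in $N$. Let $F$ be the CDF of $S_L$. Conditioning on $T_t'$ gives
\[
r(N)=\sum_{k\ge1}p_t(1-p_t)^{k-1}\,\frac{F(N-k)}{F(N)},
\]
so it suffices to show that $F(N-k)/F(N)$ is nondecreasing in $N$ for each fixed $k$, i.e.\ that $F$ is log-concave on its support. I would argue this in three moves:
\begin{itemize}
\item the geometric pmf is log-concave;
\item convolutions of log-concave pmfs on $\mathbb Z$ are log-concave (a standard Hoggar/Ibragimov-type fact), so the pmf of $S_L$ is log-concave;
\item a log-concave pmf has a log-concave CDF, by the discrete Prékopa-type summation argument.
\end{itemize}
Log-concavity of $F$ means $F(N-1)/F(N)$ is nondecreasing in $N$. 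Telescoping then gives the same for $F(N-k)/F(N)=\prod_{j=0}^{k-1}F(N-j-1)/F(N-j)$, since each factor is nondecreasing and nonnegative. Care is needed at the lower edge of the support, where $F(N-k)=0$. There the factor $F(N-k)/F(N)$ is zero and later becomes positive, so monotonicity still holds. If citing these log-concavity facts proves awkward, my fallback is to prove $F(N-1)^2\ge F(N)F(N-2)$ directly by induction on $L$ using the recurrence \Cref{eq:bfs_trace_recurrence}.
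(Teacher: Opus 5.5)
Your proposal is correct and follows essentially the same route as the paper. The paper applies the identity $p\,\partial_p\mathbb E[\psi(T)]=\mathbb E[\psi(T)]-\mathbb E[\psi(T+T')]$ with $\psi(k)=\mathbb P(R_t+k\le N)$, which is your indicator computation averaged over $R$, and it proves monotonicity by the same log-concavity chain (geometric pmf, convolution, CDF) followed by averaging $F(n-k)/F(n)$ over $T_t'$; your finite-sum justification for the interchange and your treatment of the edge where $F(N-k)=0$ fill in details the paper leaves implicit.
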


\begin{proof}
For $T\sim\mathrm{Geom}(p)$ and any bounded $\psi$, $p\,\partial_p\mathbb E[\psi(T)]=\mathbb E[\psi(T)]-\mathbb E[\psi(T+T')]$. Apply with $\psi(k)=\mathbb P(R_t+k\le N)$, $R_t=\sum_{i\ne t}T_i$, and divide by $\widehat J(N)$. For monotonicity: the pmf of $S_L$ is a convolution of log-concave pmfs, hence log-concave, and so is its CDF $F$; thus $F(n-k)/F(n)$ is nondecreasing in $n$ for fixed $k\ge 1$, and $1-\widehat w_t(n)=\mathbb E_{T_t'}[F(n-T_t')/F(n)]$ is nondecreasing.
\end{proof}

\subsubsection{Separating alternative proofs: the bypass censored coupling}
To analyze the impact of off-trace proofs on BFS, consider an artificial "bypass censored process". This process behaves exactly like standard, deployed BFS. Except, if it ever samples a tactic which would yield a proof other than the expert proof, we replace the observation of $x_{QED}$ by $x_{err}$. So it only succeeds if it returns the demonstrated trace but it is allowed to be off path during the search.

\begin{lemma}[Bypass Censoring]
\label{lem:censoring}
 Let $N$ be the initial budget, $J^{\mathrm{trap}}(N)$ be the censored process' success probability and let $\beta(N)$ be the probability that the deployed run samples a transition which would have been censored if we had instead run the censored process. Then
\begin{equation}
    J^{\mathrm{trap}}(N)\;\le\;J^{\mathrm{true}}(N)\;\le\;J^{\mathrm{trap}}(N)+\beta(N).
    \label{eq:censoring_bracket}
\end{equation}
\end{lemma}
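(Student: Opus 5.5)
The plan is a coupling argument: run the deployed process and the bypass-censored process on one probability space. Both share the same selection rule $S_{\mathcal A}$, the same update rule $U_{\mathcal A}$ and the same policy, and the censored process differs only in the observation returned by $\mathcal T$. So I would drive them with identical randomness: the same uniforms for every selection draw and every tactic sample, indexed by expansion number. Concretely, fix a probability space carrying i.i.d.\ uniforms $(V_k,W_k)_{k\ge1}$. At expansion $k$, each process uses $V_k$ to select a node from its current configuration and $W_k$ to sample a tactic there by inverse-CDF from $\pi_\theta(\cdot\mid x)$. Each process then has the correct marginal law, namely the one given by \Cref{eq:exact_search_recurrence} with its own transition. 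It then suffices to compare the two success events pathwise.

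\textbf{Step 1 (agreement until the first censorable transition).} Let $\sigma$ be the first expansion index at which the deployed run samples a tactic $y$ at a node $x$ with $\mathcal T(x,y)=x_{QED}$ via a trajectory other than $\tau^*$, i.e.\ a transition the censored process would map to $x_{err}$. By induction on $k$, for all $k<\sigma$ the two configurations coincide. Indeed, they start equal at $c_0$. Given equal configurations, the same $V_k,W_k$ produce the same node and tactic, hence the same observation $z$ because no censoring occurs, and the deterministic $U_{\mathcal A}$ gives equal next configurations. By definition $\beta(N)=\mathbb P(\sigma\le\text{termination})$.

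\textbf{Step 2 (lower bound).} On the censored success event, the censored run completes $\tau^*$ at some expansion. I claim the deployed run succeeds there as well. If $\sigma$ occurs before that expansion, the deployed run has just observed $x_{QED}$ at step $\sigma$, so it is solved and succeeds. Otherwise the runs coincide up to and including the censored run's success, so the deployed run reaches the same solved configuration. Hence $\{\text{censored succeeds}\}\subseteq\{\text{deployed succeeds}\}$, which gives $J^{\mathrm{trap}}(N)\le J^{\mathrm{true}}(N)$.

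\textbf{Step 3 (upper bound).} Split the deployed success event by whether $\sigma$ occurs. On $\{\sigma \text{ does not occur}\}$ the runs coincide throughout, so the deployed success is also a censored success. The remaining part has probability at most $\beta(N)$. Hence $J^{\mathrm{true}}\le J^{\mathrm{trap}}+\beta$.

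\textbf{Main obstacle.} The delicate point is the case in Step 2 where the runs have already diverged after $\sigma$, since then their configurations need not agree. The resolution is that divergence can only begin with the deployed run observing $x_{QED}$. Under the convention that a configuration is solved as soon as it contains a completed proof, the deployed run terminates successfully at $\sigma$, so no post-divergence comparison is needed. I would state this convention explicitly, namely that "solved" is absorbing with value $1$ in \Cref{eq:exact_search_recurrence}, together with the requirement that the coupling index randomness by expansion number, so that both processes consume randomness identically while they agree. If selection consumed a variable amount of randomness per step, I would instead index the uniforms by (configuration, step) pairs.
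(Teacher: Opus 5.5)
Your proposal is correct and follows essentially the same route as the paper. Both couple the deployed and censored runs on shared selection and tactic randomness, observe that they coincide until the deployed run first draws a censorable transition (at which point it has already succeeded), and read off both inequalities. Your indexed-uniform construction and the induction in Step 1 only make rigorous what the paper states in one line.
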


\begin{proof}
Couple the two runs on the same tactic draws and selector randomness. They are identical until the deployed run first samples a transition which would have been censored: at that moment, the deployed run has succeeded. If no censored transition is ever sampled, outcomes agree. So censored success implies deployed success, and deployed success implies censored success or divergence.
\end{proof}

This is just to say that alternate proof paths can only \emph{add} success. To use this to analyze the weights, split the true success probability by which event occurs first, $J^{\mathrm{true}} = J^{\dagger} + B$ ($J^\dagger$: completion of the demonstrated trace; $B$: completion of an alternate trace). From looking at the sum of elasticities,
$$\frac{\partial J^{\mathrm{true}}}{\partial p_t} = \frac{\partial J^{\dagger}}{\partial p_t} + \frac{\partial B}{\partial p_t}$$
$$w_t^\star = \frac{p_t}{J^{\mathrm{true}}} \frac{\partial J^{\dagger}}{\partial p_t} + \frac{p_t}{J^{\mathrm{true}}} \frac{\partial B}{\partial p_t}$$
$$w_t^\star = \left( \frac{J^{\dagger}}{J^{\mathrm{true}}} \right) \left( \frac{p_t}{J^{\dagger}} \frac{\partial J^{\dagger}}{\partial p_t} \right) + \left( \frac{B}{J^{\mathrm{true}}} \right) \left( \frac{p_t}{B} \frac{\partial B}{\partial p_t} \right)$$
\begin{equation}
    w_t^\star
    = \lambda\, w_t\!\big(J^{\dagger}\big) + (1-\lambda)\, w_t(B),
    \qquad \lambda = \frac{J^{\dagger}}{J^{\mathrm{true}}},
    \label{eq:mixture_identity}
\end{equation}
holds \emph{exactly}.

This mixture identity reveals that the true weight is subject to a tug-of-war between two competing "channels" of success.

First, the \textbf{trap channel} ($J^\dagger$) captures trajectories that wander into dead ends but eventually return to complete the demonstrated proof. Because these off-trace excursions consume a shared compute budget, they increase the bottleneck on the remaining path. This increased criticality pushes the weight $w_t(J^\dagger)$ \emph{up} relative to the trace-supported surrogate $\widehat w_t$ (formalized in \Cref{prop:shared_budget_bracketing_revised}).

Conversely, the \textbf{bypass channel} ($B$) captures trajectories that successfully discover alternative, unobserved proofs. The existence of these alternative solutions makes the exact demonstrated tactic less critical for global success, which pushes the weight \emph{down}, mirroring the elasticity squeeze observed in the Pass@$N$ analysis. Thus, we will focus the remaining analysis on the effect of the trap channel.

One structural caveat remains: the true BFS selector's geometric-mean scores dynamically depend on $p$, a dependence the trace-supported surrogate erases to maintain independence between steps. Because Levin scores are cumulative, increasing an early on-trace probability $p_s$ creates a cascading advantage for all subsequent trace nodes. This allows the trace to win frontier comparisons sooner, actively suppressing the length of future off-trace excursions. This positive feedback loop means the true search penalizes on-trace mistakes even more heavily than our surrogate models. Consequently, the trap channel's upward pressure on the deployed weight $w_t^\star$ is strictly stronger in reality. We will, however, treat the mixture identity (\Cref{eq:mixture_identity}) as exact.

\subsubsection{The Trap Channel: Exploration Waste and Shared-Budget Bracketing}
\label{subsubsec:trap_channel}

In the trap channel, when BFS samples an off-trace tactic at state $x_{t-1}$, the algorithm embarks on an off-trace excursion until it returns back to the trace. To quantify the computational toll of deviations, let $E_{t,j} \in \{0, 1, \ldots\} \cup \{\infty\}$ denote the number of wasted node expansions spent off-trace following the $j$-th miss at state $x_{t-1}$ before the search backtracks and re-selects the trace node ($E_{t,j} = \infty$ if the algorithm is permanently absorbed). The trap-channel success probability is then precisely represented by:
\begin{equation}
    J^{\mathrm{trap}}(N) = \mathbb{P}\!\left(\sum_{t=1}^{L}T_t \;+\; \sum_{t=1}^{L}\sum_{j=1}^{T_t-1}E_{t,j} \;\le\; N\right),
    \label{eq:excursion_representation}
\end{equation}
where $T_t \sim \mathrm{Geom}(p_t)$ tracks on-trace trials. Different choices of weight during training can be viewed as distinct assumptions on $E$:
\begin{itemize}
    \item \textbf{Minimal waste ($E \equiv 0$):} Every off-trace tactic is instantly rejected by the environment ($x_{err}$). A miss costs exactly $1$ expansion, recovering our CAT retry surrogate exactly ($J^{\mathrm{trap}}(N) = \widehat{J}(N)$).
    \item \textbf{Permanent absorption ($E \equiv \infty$):} Any off-trace mistake traps the search indefinitely. Here, $J^{\mathrm{trap}}(N) = \prod_{t=1}^L p_t$ for any budget $N \ge L$, yielding unit elasticity ($w_t^\star = 1$), recovering the exact regime where standard SFT weighting is optimal.
\end{itemize}

Between these two extremes lies the realistic setting where off-trace excursions consume finite, non-zero compute before backtracking. To isolate how this exploration waste alters the true weight, we first examine the case of a fixed return cost.

\begin{assumption}[Deterministic excursion cost]
\label{assum:effective_deviation_cost_revised}
In the trap-channel process, every miss incurs a total expenditure of $\kappa \ge 1$ expansions (the initial trial plus an excursion of length $E \equiv \kappa - 1$) before returning the search to $x_{t-1}$; successful trials cost $1$ expansion; $\kappa$ is independent of the policy probabilities $p$.
\end{assumption}

\begin{proposition}[Shared-budget bracketing]
\label{prop:shared_budget_bracketing_revised}
Under \Cref{assum:effective_deviation_cost_revised}, let $N\ge L$ be an integer budget and $p_s\in(0,1)$ for all $s$. Completing the demonstrated path in the bypass-censored process within budget $N$ is equivalent to evaluating the retry surrogate at a reduced effective budget:
\begin{equation}
    J^{\mathrm{trap}}(N) = \widehat{J}(N_{\mathrm{eff}}),
    \qquad
    N_{\mathrm{eff}} = L + \frac{N-L}{\kappa} \le N.
    \label{eq:effective_budget_revised}
\end{equation}
For noninteger budget arguments $B$, interpret $\widehat{J}(B)=\widehat{J}(\lfloor B\rfloor)$ and $\widehat{w}_t(B)=\widehat{w}_t(\lfloor B\rfloor)$.

Define the trap-channel weight by $w_t^{\mathrm{trap}}(N):=w_t(J^{\mathrm{trap}}(N))$. Because $\kappa$ is independent of $p$, the effective-budget identity and \Cref{lem:criticality_revised} give
\begin{equation}
    \widehat{w}_t(N)
    \le w_t^{\mathrm{trap}}(N)
    = \widehat{w}_t(N_{\mathrm{eff}})
    \le 1 = w_t^{\mathrm{CE}}.
    \label{eq:shared_budget_bracket_revised}
\end{equation}
\end{proposition}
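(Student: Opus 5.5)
The plan is to reduce everything to an exact algebraic identity for the total cost, and then invoke \Cref{lem:criticality_revised}. Start from the excursion representation \Cref{eq:excursion_representation}. Under \Cref{assum:effective_deviation_cost_revised} every excursion has deterministic length $E_{t,j}\equiv\kappa-1$. At step $t$ there are $T_t-1$ misses, each costing $\kappa$ in total, plus one successful trial costing $1$. The total expenditure to complete the demonstrated path is therefore
\[
C \;=\; \sum_{t=1}^{L}\big[1+\kappa(T_t-1)\big] \;=\; L+\kappa\,(S_L-L),
\qquad S_L=\sum_{t=1}^{L}T_t .
\]
I will first check that the censored process succeeds within budget $N$ exactly when $C\le N$. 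If the budget runs out partway through an excursion or a retry, the run fails. Because the cost is accumulated monotonically along a single sequential path, "first exceeding $N$" is the same event as $C>N$.

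Next, rewrite the event as $\{C\le N\}=\{S_L-L\le (N-L)/\kappa\}$. Since $S_L-L$ is a nonnegative integer, this is equivalent to $S_L\le L+\lfloor (N-L)/\kappa\rfloor=\lfloor N_{\mathrm{eff}}\rfloor$. Taking probabilities gives $J^{\mathrm{trap}}(N)=\mathbb P(S_L\le\lfloor N_{\mathrm{eff}}\rfloor)=\widehat J(N_{\mathrm{eff}})$ under the stated rounding convention. Two facts then hold: $N_{\mathrm{eff}}\le N$ because $\kappa\ge1$ and $N\ge L$, and $\lfloor N_{\mathrm{eff}}\rfloor\ge L$, so the surrogate is evaluated at an admissible budget.

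For the weights, the key observation is that $\kappa$, and hence $\lfloor N_{\mathrm{eff}}\rfloor$, does not depend on $p$. The identity $J^{\mathrm{trap}}(N;p)=\widehat J(\lfloor N_{\mathrm{eff}}\rfloor;p)$ therefore holds as functions of $p$ on $(0,1)^L$, not just pointwise. Taking elasticities in $p_t$ gives $w_t^{\mathrm{trap}}(N)=\widehat w_t(N_{\mathrm{eff}})$. \Cref{lem:criticality_revised} says $\widehat w_t(\cdot)$ is non-increasing in the integer budget and bounded by $1$. Applying it with $\lfloor N_{\mathrm{eff}}\rfloor\le N$ yields $\widehat w_t(N)\le\widehat w_t(N_{\mathrm{eff}})\le 1$, which is \Cref{eq:shared_budget_bracket_revised}.

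The main obstacle is the first step: justifying that the censored BFS process under the assumption is faithfully captured by the cost identity. This requires that the search returns to the same trace state $x_{t-1}$ after each excursion and that no interleaving occurs across trace nodes. I would treat this as what the assumption asserts, namely that each miss consumes exactly $\kappa$ units and then returns to $x_{t-1}$. I would also note explicitly that success is censored to the demonstrated trace, so $T_t$ are the independent geometric trial counts of \Cref{eq:retry_surrogate_again}. Everything after that step is arithmetic plus the monotonicity already proved in \Cref{lem:criticality_revised}.
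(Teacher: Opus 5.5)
Your proposal is correct and follows the same route as the paper. The paper also sums per-step costs $\kappa(G_t-1)+1$ to get total cost $\kappa S_L-(\kappa-1)L$, equates success within budget $N$ with $S_L\le N_{\mathrm{eff}}$, and concludes from $N_{\mathrm{eff}}\le N$ together with the monotonicity of $\widehat w_t$ in \Cref{lem:criticality_revised}; your additional points (integrality of $S_L-L$ for the floor convention, the identity holding as functions of $p$ so the elasticities coincide, and $\lfloor N_{\mathrm{eff}}\rfloor\ge L$) make explicit details the paper leaves implicit.
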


\begin{proof}
Completing step $t$ requires $G_t \sim \mathrm{Geom}(p_t)$ trials costing $\kappa(G_t-1)+1$ total expansions. Summing over the trace yields a total cost $\kappa S_L - (\kappa-1)L \le N \iff S_L \le N_{\mathrm{eff}}$. The bracket directly follows since $N_{\mathrm{eff}} \le N$ for $\kappa \ge 1$ and $\widehat{w}_t(\cdot)$ is non-increasing.
\end{proof}

In real policy-guided Best-First Search, excursions do not incur deterministic costs; rather, they exhibit stochastic, bounded trajectories ($E \le \bar{\kappa} - 1$ a.s.). Crucially, we can actively engineer this bound during deployment to prevent permanent absorption.

\paragraph{Enforcing Bounded Excursions via Global Search Restarts.}
In practical ATP deployments, we actively prevent policies from being trapped in cyclical or deep, unguided proof attempts ($P(E = \infty) = 0$) by deploying a restart-based Best-First Search heuristic. Specifically, if the global BFS does not discover a complete proof within a step threshold of $M < N$ total node expansions, the active search tree is aborted and the algorithm resets entirely to the initial theorem state $x_0$ to begin a fresh run with the remaining compute budget. However, the cost to setting such an $M$ too low is aborting a possibly prominent path so in practice this $M$ is fairly high. This restart mechanism guarantees that no single exploratory trajectory can waste more than $M$ expansions off-trace. Consequently, the exploration waste incurred before re-attempting the demonstrated trajectory is strictly bounded ($E \le M$ a.s., enforcing a maximum excursion severity $\bar{\kappa} \le M + 1$). While enforcing this test-time constraint eliminates the SFT ceiling ($w_t^\star < 1$), combining it with \Cref{prop:shared_budget_bracketing_revised} reveals an irreducible tension between trace-only supervision and search optimization.

\begin{proposition}[Irreducible Ambiguity under Bounded Excursions]
\label{prop:irreducible_ambiguity}
Call a per-step weighting rule \emph{trace-supported} if it depends solely on training-time observables along $\tau^*$ (specifically, the demonstrated probabilities $\{p_s\}_{s=1}^L$, the search strategy $\mathcal{A}$, and the initial budget $N$). Suppose test-time search engineering enforces an upper bound on excursion waste such that $\kappa \in [1, \bar{\kappa}]$ for a finite limit $\bar{\kappa} > 1$. Then:
\begin{enumerate}
    \item[(i)] The true deployed weight $w_t^\star$ is strictly confined to the compressed bracket:
    \begin{equation}
        w_t^\star \in \left[\,\widehat{w}_t(N),\; \widehat{w}_t\!\left(L + \frac{N-L}{\bar{\kappa}}\right)\right].
    \end{equation}
    \item[(ii)] Because the realized excursion waste $\kappa$ depends on unobserved off-trace dynamics, no trace-supported objective can uniquely identify $w_t^\star$. Any trace-supported rule $w_t$ incurs a worst-case approximation error bounded below by half the bracket width:
    \begin{equation}
        \sup_{\kappa \in [1, \bar{\kappa}]} \big|w_t - w_t^\star(\kappa; N)\big| \;\ge\; \frac{1}{2}\left( \widehat{w}_t\!\left(L + \frac{N-L}{\bar{\kappa}}\right) - \widehat{w}_t(N) \right) \;>\; 0.
    \end{equation}
\end{enumerate}
\end{proposition}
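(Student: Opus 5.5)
The plan is to reduce both parts to \Cref{prop:shared_budget_bracketing_revised} and the monotonicity in \Cref{lem:criticality_revised}. Throughout, I identify the deployed weight $w_t^\star$ with the trap-channel weight $w_t^{\mathrm{trap}}$, treating the bypass-censored process as the deployed one, as the paper does when it treats \Cref{eq:mixture_identity} as exact and focuses on the trap channel. I also treat the excursion cost $\kappa\in[1,\bar\kappa]$ as a deterministic constant independent of $p$, as in \Cref{assum:effective_deviation_cost_revised}.

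\textbf{Part (i).} For fixed $\kappa$, \Cref{prop:shared_budget_bracketing_revised} gives
\[
w_t^\star(\kappa;N)=\widehat w_t\big(N_{\mathrm{eff}}(\kappa)\big),\qquad N_{\mathrm{eff}}(\kappa)=L+\frac{N-L}{\kappa}.
\]
Since $N\ge L$, the map $\kappa\mapsto N_{\mathrm{eff}}(\kappa)$ is nonincreasing and takes values in $[L+(N-L)/\bar\kappa,\,N]$ on $[1,\bar\kappa]$. Flooring preserves this ordering. By \Cref{lem:criticality_revised}, $\widehat w_t$ is nonincreasing in its budget argument, so $w_t^\star(\kappa;N)$ is nondecreasing in $\kappa$. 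Evaluating at the endpoints $\kappa=1$ and $\kappa=\bar\kappa$ gives the bracket.

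\textbf{Part (ii).} A trace-supported rule sees only $(\{p_s\},\mathcal A,N)$. Environments with different $\kappa$ share all of these, so such a rule outputs one number $w_t$ for every $\kappa\in[1,\bar\kappa]$. I will use a two-point argument: take $\kappa=1$ and $\kappa=\bar\kappa$, with targets $\ell=\widehat w_t(N)$ and $u=\widehat w_t\big(L+(N-L)/\bar\kappa\big)$. By the triangle inequality,
\[
\max\big(|w_t-\ell|,\,|w_t-u|\big)\ge \tfrac12(u-\ell),
\]
and the supremum over $\kappa$ dominates this maximum. This is the standard minimax argument. The nonconstructive point that two environments can agree on all trace observables but differ in $\kappa$ is justified by building off-trace subtrees of fixed depth $\kappa-1$ that terminate in dead ends. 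Both constructions leave every demonstrated transition and every $p_s$ unchanged.

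\textbf{Main obstacle: strict positivity of the bracket width.} I must show $\widehat w_t(\lfloor N_{\mathrm{eff}}(\bar\kappa)\rfloor)>\widehat w_t(N)$, so strict monotonicity is needed rather than the weak monotonicity in \Cref{lem:criticality_revised}. This requires two conditions. First, $\lfloor L+(N-L)/\bar\kappa\rfloor<N$, which holds whenever $N>L$ and $\bar\kappa>1$ because $(N-L)/\bar\kappa<N-L$. If $N=L$ the bracket collapses to $\{1\}$, so I would state the hypothesis $N>L$ explicitly. Second, $\widehat w_t(n)$ must be strictly decreasing in $n\ge L$.

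For strict decrease I would first try strict log-concavity. With all $p_i\in(0,1)$, the pmf of $S_L$ has full support $\{L,L+1,\dots\}$, and the convolution of geometric pmfs gives $F(n-k)/F(n)$ strictly increasing in $n$ for $k\ge1$. One option is to check this via Newton-type strict inequalities, since the geometric laws are strictly log-concave in the sense of ratios of the pmf. A more direct option for consecutive budgets is to use the representation $1-\widehat w_t(n)=\mathbb E[F(n-T_t')]/F(n)$ and show the increment is positive. At $n=L$ the value is $1$ and at $n=L+1$ it is strictly less than $1$, and strict log-concavity of the CDF propagates the strict decrease forward. If strict log-concavity proves awkward, I would fall back to a direct computation of $\widehat w_t(n)-\widehat w_t(n+1)$ using the recurrence \Cref{eq:bfs_trace_recurrence}.
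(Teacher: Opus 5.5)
Your proposal is correct and follows the paper's proof. Part (i) comes from the effective-budget identity of \Cref{prop:shared_budget_bracketing_revised} together with the monotonicity in \Cref{lem:criticality_revised}, and part (ii) from the same two-endpoint ($\kappa=1$ versus $\kappa=\bar\kappa$) triangle-inequality argument. Your treatment of the final strict inequality goes beyond the paper, which asserts $>0$ without justification. The hypothesis $N>L$ is genuinely needed, since at $N=L$ the bracket collapses to $\{1\}$. The strict decrease of $\widehat w_t$ follows because the $k=1$ term $F(n-1)/F(n)$ is strictly increasing for $n\ge L$ and carries weight $p_t>0$. One correction: your intermediate claim that $F(n-k)/F(n)$ is strictly increasing for every $k\ge 1$ is not literally true, because for $k\ge 2$ it stays at $0$ for $n<L+k$.
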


\begin{proof}
By \Cref{prop:shared_budget_bracketing_revised}, an environment with effective excursion cost $\kappa$ realizes the deployed weight $w_t^\star = \widehat{w}_t(L + \frac{N-L}{\kappa})$. Because environments with $\kappa=1$ (instant syntax error) and $\kappa=\bar{\kappa}$ (maximum restart cap $M$) yield identical training-time observables along $\tau^*$, any trace-supported rule must output the same scalar $w_t$ across both environments. The minimax lower bound follows directly from applying the triangle inequality to the interval endpoints.
\end{proof}

\subsection{Budget scaling of the weight error: hump or floor}
\label{subsec:hump_floor}

Using the understanding we've built, we can understand how the error in our approximation scales. A key intuition we will need is that the surrogate's weight error is generally not monotone in $N$. When $N$ is minimal ($N=1$ in our setting), we are just performing a single rollout. We aren't really performing search, and both the surrogate and real weights understand this, so the error of the surrogate is 0. Conversely, as $N\to\infty$, our probability of success (with some caveats) becomes 1. Thus, both the surrogate and deployed weights tend to 0. The error only really occurs in the middle between these two extremes.

\paragraph{The minimal-budget endpoint is universal.} At the smallest budget that can possibly succeed, any strategy must execute $\tau^*$ with no misses: both $J^{\mathcal A}$ (trap channel) and $\widehat J_{\mathcal A}$ equal $\prod_t p_t$, whose elasticity is $1$ at every step. So $\widehat w_t = w_t^\star = 1 = w_t^{\mathrm{CE}}$ and all three rules coincide (alternate proofs shorter than $L$ are the only exception, entering through $B$ in \Cref{eq:mixture_identity}).

\begin{proposition}[Pass@$N$: vanishing error]
\label{prop:passn_asymptotics}
Fix the policy with $a\in(0,1)$. Then $\widehat w_t(1)=1$ and $w_t^\star(1)=\eta_t$; both $\widehat w_t(N)$ and $w_t^\star(N)$ tend to $0$ as $N\to\infty$; the bypass term of \Cref{eq:passn_error_decomp} is zero at $N=1$ and vanishes as $N\to\infty$ (a hump), while the reallocation term decays monotonically from $1-\eta_t$. In particular, the CAT error tends to $0$ while the CE error $|1-w_t^\star|$ tends to $1$.
\end{proposition}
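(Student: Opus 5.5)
The plan is to reduce everything to the factorization of \Cref{thm:passn_squeeze}, namely $w_t^\star(N)=\eta_t H_F(P_{\mathrm{any}})$ and $\widehat w_t(N)=H_F(a)$. Here $F=F_N$ depends on $N$, but $\eta_t$, $a$, $b$, and $P_{\mathrm{any}}$ do not, because the policy is fixed.

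\textbf{Step 1: the case $N=1$.} We have $F(x)=x$, so $H_F\equiv 1$. This gives $\widehat w_t(1)=1$ and $w_t^\star(1)=\eta_t$.

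\textbf{Step 2: the limits.} Since $a\in(0,1)$ and $a\le P_{\mathrm{any}}$, both arguments of $H_F$ are positive. If $P_{\mathrm{any}}<1$, \Cref{lem:HF} gives $H_F(a)\to 0$ and $H_F(P_{\mathrm{any}})\to0$. The edge case $P_{\mathrm{any}}=1$ has to be checked separately. There $F(1)=1$ and $F'(1)=N\cdot 0^{N-1}=0$ for $N\ge2$, so $H_F(1)=0$ and the limit still holds. Because $\eta_t$ is a fixed finite number (it is bounded above by $1$ by \Cref{lem:unit_elasticity} and is finite since $P_{\mathrm{any}}>0$), we get $w_t^\star(N)\to0$. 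It follows that the CAT error $|\widehat w_t-w_t^\star|\to 0$ and the CE error $|1-w_t^\star|\to 1$.

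\textbf{Step 3: the decomposition terms of \Cref{eq:passn_error_decomp}.} The bypass term is $H_F(a)-H_F(P_{\mathrm{any}})$. At $N=1$ it equals $1-1=0$. It is nonnegative for every $N$ by the monotonicity in \Cref{lem:HF}, and it tends to $0$ by Step 2. It is therefore a hump: zero at one end, vanishing at the other, nonnegative in between, and strictly positive for intermediate $N$ whenever $b>0$. To show strict positivity I would use the explicit formula
\[
H_F(x)=N\Big/\sum_{j<N}(1-x)^{-j}.
\]
For $N\ge2$ the denominator is strictly increasing in $x$, so the term is strictly positive when $b>0$.

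The reallocation term is $(1-\eta_t)H_{F_N}(P_{\mathrm{any}})$, which equals $1-\eta_t$ at $N=1$. For monotone decay in $N$ it suffices that $N\mapsto H_{F_N}(x)$ is nonincreasing for fixed $x$. Write $r=(1-x)^{-1}\ge1$, so that $H_{F_N}=N\big/\sum_{j<N}r^j$, the reciprocal of the average of $1,r,\dots,r^{N-1}$. Since $r^j$ is nondecreasing in $j$, appending the larger term $r^N$ cannot lower the average, so $H_{F_N}$ is nonincreasing in $N$. Together with $1-\eta_t\ge 0$, this gives monotone decay of the reallocation term to $0$.

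\textbf{Main obstacle.} The substance is only the $N$-monotonicity of $H_{F_N}$ and the boundary case $P_{\mathrm{any}}=1$. The subtle point is the precise meaning of ``hump''. I would state it as: zero at $N=1$, nonnegative for all $N$, positive for intermediate $N$ when $b>0$, and tending to $0$. I would not claim unimodality unless it is needed.
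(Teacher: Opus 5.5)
Your proof is correct and follows the paper's route: both reduce everything to the factorization $w_t^\star=\eta_t H_F(P_{\mathrm{any}})$, $\widehat w_t=H_F(a)$ from \Cref{thm:passn_squeeze}, then use $H_F\equiv 1$ at $N=1$ and $H_F(x)\to 0$ as $N\to\infty$ from \Cref{lem:HF}. Your averaging argument that $N\mapsto H_{F_N}(x)$ is nonincreasing, and your separate check of the case $P_{\mathrm{any}}=1$, are worthwhile additions: the paper's one-line proof cites only monotonicity in $x$ and the limit in $N$, which do not by themselves give the claimed monotone decay of the reallocation term, and \Cref{lem:HF} is stated only for $x\in(0,1)$.
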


\begin{proof}
At $N=1$, $H_F\equiv 1$. As $N\to\infty$, $H_F(x)\to 0$ for fixed $x>0$ (\Cref{lem:HF}); apply to $x=a$ and $x=P_{\mathrm{any}}$ in \Cref{thm:passn_squeeze} and \Cref{eq:passn_error_decomp}.
\end{proof}

For BFS, the outcome depends on the excursions. We isolate it to a specific case.

\begin{assumption}[Absorbing-excursion model]
\label{assum:absorbing_excursions}
In the trap-only censored process, each miss at step $t$, independently of everything else, launches a non-returning excursion ($E=\infty$) with probability $q_t\in[0,1]$, and otherwise an excursion of bounded length $E\le\bar\kappa-1$.
\end{assumption}

\begin{proposition}[Hump or floor]
\label{prop:hump_floor}
Under \Cref{assum:absorbing_excursions}:
\begin{enumerate}
    \item[(a)] \textbf{(Recurrence $\Rightarrow$ hump.)} If $q_t=0$ for all $t$ and excursions are deterministic ($E\equiv\kappa-1$), the trap-channel error is $w_t^\star(N)-\widehat w_t(N) = \widehat w_t(N_{\mathrm{eff}})-\widehat w_t(N)\ \ge 0$, which is $0$ at $N=L$, tends to $0$ as $N\to\infty$ (since $N_{\mathrm{eff}}\to\infty$), and is positive in between whenever $\kappa>1$: the error is humped. For bounded random excursions with no absorption, the error is likewise zero at $N=L$ and vanishes as $N\to\infty$; its finite-budget behavior is governed by \Cref{eq:excursion_representation}.
    \item[(b)] \textbf{(Absorption $\Rightarrow$ floor.)} If $q_t>0$, then as $N\to\infty$,
    \begin{equation}
        J^{\mathrm{trap}}(N)\;\longrightarrow\;\prod_{t=1}^{L}\frac{p_t}{p_t+(1-p_t)q_t},
        \qquad
        w_t^\star(N)\;\longrightarrow\;\frac{q_t}{p_t+(1-p_t)q_t}\;>\;0,
        \label{eq:bias_floor}
    \end{equation}
    while $\widehat w_t(N)\to 0$: the CAT error converges to the strictly positive floor \eqref{eq:bias_floor}. As $q_t\to1$, the floor tends to $1$ and CE becomes asymptotically exact at step $t$, recovering the permanent-absorption limit; as $q_t\to0$, the limiting error at that step vanishes.
\end{enumerate}
\end{proposition}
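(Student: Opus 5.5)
Part (a) follows almost directly from \Cref{prop:shared_budget_bracketing_revised} and \Cref{lem:criticality_revised}. With $q_t=0$ and $E\equiv\kappa-1$, the effective-budget identity gives $w_t^{\mathrm{trap}}(N)=\widehat w_t(N_{\mathrm{eff}})$ with $N_{\mathrm{eff}}=L+(N-L)/\kappa\le N$. Monotonicity of $\widehat w_t$ then gives nonnegativity of the error. At $N=L$ we have $N_{\mathrm{eff}}=L$, so the error is zero. As $N\to\infty$, $N_{\mathrm{eff}}\to\infty$, so both terms tend to $0$.

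For strict positivity in between, I would show $\widehat w_t$ is strictly decreasing on integer steps. This follows from the criticality form once $S_L\le n<S_L+T_t'$ has positive conditional mass and full support of the geometric laws. Care is needed only because the floor at noninteger arguments can make $N_{\mathrm{eff}}$ and $N$ share a floor for small $N-L$. I would therefore state positivity for $N$ with $\lfloor N_{\mathrm{eff}}\rfloor<N$, which holds for all $N>L$ when $\kappa>1$.

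For bounded random excursions, the total cost lies between $S_L$ and $\bar\kappa S_L-(\bar\kappa-1)L$. Hence $J^{\mathrm{trap}}(N)\to1$ and the error is zero at $N=L$. To show the weight tends to $0$, I would use the geometric identity $p\partial_p\mathbb E\psi(T)=\mathbb E\psi(T)-\mathbb E\psi(T+T')$ from \Cref{lem:criticality_revised}, applied to the trap representation \eqref{eq:excursion_representation}. The excursion laws are held fixed, so adding $T_t'$ extra trials adds $T_t'$ extra trial-plus-excursion costs. The weight is then a probability difference in which both terms tend to $1$.

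For part (b), I would condition on the absorption indicators. Each miss at step $s$ is absorbing with probability $q_s$. The number of non-absorbing misses before the first success or absorption at step $s$ is geometric, and step $s$ is completed without absorption with probability
\[
\sum_{k\ge0}\big((1-p_s)(1-q_s)\big)^k p_s=\frac{p_s}{p_s+(1-p_s)q_s}.
\]
Steps are independent. On the event of no absorption, total cost is almost surely finite, since it is bounded by $\bar\kappa$ times a finite sum of geometric counts. So by monotone convergence $J^{\mathrm{trap}}(N)\uparrow\prod_s p_s/(p_s+(1-p_s)q_s)$.

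For the weight limit, direct differentiation of the limit gives $p_t\partial_{p_t}\log$ of the step-$t$ factor, namely $1-p_t(1-q_t)/(p_t+(1-p_t)q_t)=q_t/(p_t+(1-p_t)q_t)$.

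The main obstacle is justifying that the limit of $w_t^\star(N)$ equals the derivative of the limit. I would handle this by writing $J^{\mathrm{trap}}(N)$ as a series in the success-step count $G_t$ with $p_t$-free tail probabilities. Then $p_t\partial_{p_t}J^{\mathrm{trap}}(N)=J^{\mathrm{trap}}(N)-J^{\mathrm{trap},+}(N)$, where the second term is the same process with an independent extra batch of step-$t$ failures appended via $T_t'$. Both converge by the same monotone argument, to the limit times $1$ and to the limit times $(1-p_t)$-weighted absorption-free factor respectively. This yields the stated floor.

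Finally, $\widehat w_t(N)\to0$ is \Cref{lem:criticality_revised}, and the statements as $q_t\to1$ and $q_t\to0$ follow by substitution.
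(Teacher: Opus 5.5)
\textbf{Comparison with the paper's proof.} Your part (a) is the paper's argument: the effective-budget identity of \Cref{prop:shared_budget_bracketing_revised} plus monotonicity from \Cref{lem:criticality_revised}. Your computation of $\lim J^{\mathrm{trap}}(N)$ in (b) also matches the paper's advance/absorb/retry calculation.

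You differ in how the limit of the weight is identified. The paper justifies swapping $N\to\infty$ with $p_t\partial_{p_t}$ by monotone convergence of $J^{\mathrm{trap}}(N)$ together with monotonicity of $w_t^\star(N)$ in $N$, ``inherited'' from \Cref{lem:criticality_revised}. You instead apply the geometric identity at each finite $N$ to get $w_t^\star(N)=1-J^{\mathrm{trap},+}(N)/J^{\mathrm{trap}}(N)$, and then pass to the limit in numerator and denominator separately.

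Your route is the more rigorous one, for two reasons:
\begin{itemize}
\item It needs no monotonicity of $w_t^\star(N)$. The lemma proves that only for sums of geometrics, via log-concavity, and it is not automatic for costs with random, possibly infinite, excursions.
\item It needs no interchange of limit and derivative. The paper would still owe an argument that the limit of elasticities is the elasticity of the limit.
\end{itemize}
It also gives the floor a clean meaning. State the numerator's limit as $\prod_s f(p_s)\cdot\mathbb E[(1-q_t)^{T_t'}]$ rather than as a ``$(1-p_t)$-weighted'' factor. Since $\mathbb E[(1-q_t)^{T_t'}]=p_t(1-q_t)/(p_t+(1-p_t)q_t)$, the limiting weight is exactly the probability that an extra geometric batch of misses at step $t$ triggers absorption. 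The same finite-$N$ identity also proves your random-excursion claim in (a), which the paper asserts without proof.

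\textbf{One weak step: strict positivity in (a).} Neither the paper's proof nor \Cref{lem:criticality_revised} gives strict decrease of $\widehat w_t$. Your stated reason, positive mass of $\{S_L\le n<S_L+T_t'\}$ plus full support, only yields $\widehat w_t(n)>0$, not $\widehat w_t(n+1)<\widehat w_t(n)$.

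The missing ingredient is strict log-concavity of the CDF $F$ of $S_L$ on $n\ge L$. Write
$F(n)^2-F(n-1)F(n+1)=F(n)f(n)-F(n-1)f(n+1)=\sum_{j\le n}\bigl[f(j)f(n)-f(j-1)f(n+1)\bigr]$.
Every term is nonnegative by log-concavity of the pmf $f$. The $j=L$ term equals $f(L)f(n)>0$ because $f(L-1)=0$.

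Hence $F(n-1)/F(n)$ is strictly increasing for $n\ge L$, while $F(n-k)/F(n)$ is nondecreasing for $k\ge2$. Since $\mathbb P(T_t'=1)=p_t>0$, the quantity $1-\widehat w_t(n)=\mathbb E\bigl[F(n-T_t')\bigr]/F(n)$ strictly increases. Combine this with your correct observation that $L\le\lfloor N_{\mathrm{eff}}\rfloor<N$ for all $N>L$ when $\kappa>1$, and the hump follows.
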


\begin{proof}
(a) is \Cref{prop:shared_budget_bracketing_revised} plus \Cref{lem:criticality_revised}. (b) With unlimited budget, finite excursions are free, so step $t$ succeeds iff $y_t^*$ is sampled before an absorbing excursion: per trial, advance w.p.\ $p_t$, absorb w.p.\ $(1-p_t)q_t$, retry otherwise, giving per-step success $f(p_t)=p_t/(p_t+(1-p_t)q_t)$, independent across steps. Direct computation gives $\partial\log f/\partial\log p = q/(p+(1-p)q)$. The interchange of limit and derivative is justified within the model by monotone convergence of $J^{\mathrm{trap}}(N)\uparrow\prod_t f(p_t)$ together with monotonicity of $w_t^\star(N)$ inherited from \Cref{lem:criticality_revised} applied at the extended-cost representation.
\end{proof}

\paragraph{Approximation is Incorrect Asymptotically, but Weights Converge.} The pieces we don't account for in our surrogate (the alternate proof mass $\beta(N)$ and total excursion expansions) do grow with $N$. But their effect on the weights is self-limiting. Both objectives saturate near $1$, and every elasticity is near $0$, so no weighting rule can be far from the truth. This is a key advantage of test-time aligned training.

\clearpage

\section{Depth First Search Variants}
\label{app:DFS}
Depth First Search (DFS) is not a standard algorithm used in policy guided search. We introduce it in this context because analyzing it and its variants serve as excellent examples about the mechanics of constructing search aligned loss functions.

\subsection{Standard DFS} 
Standard DFS utilizes its computational budget to sequentially construct a single path. When the algorithm expands a node $x_{t-1}$ and the tactic is rejected ($\mathcal{T}(x_{t-1}, y_t) = x_{err}$), the search does not abandon the entire trajectory. Instead, the decision rule dictates a retreat of exactly one step to the immediate parent state $x_{t-1}$, expending another unit of compute to sample an alternative tactic. This localized trial-and-error continues until the algorithm either successfully reaches $x_{QED}$ or exhausts the global node expansion budget $N$.

Finally, we note that because standard DFS commits rigidly to a valid path, its chronologically local backtracking is a structural weakness. If the search ventures into a deep, valid, but useless branch, the localized backtracking cannot escape the broader strategic error. The LLM will simply exhaust its entire computational budget trapped in the depths of a spurious path.

\paragraph{Instantiation.}
Define $\widehat{c} = \langle x, b \rangle$, where $x$ is the active trace state and $b$ is the remaining node expansion budget. The trace-supported frontier is always a singleton, $\widehat{\mathcal{F}} = \{x\}$. 

The positive update advances the path and consumes one expansion. For the negative update, an off-trace tactic immediately collapses the search back to the parent $x_{t-1}$ at the cost of one expansion. Thus, the updates are:
\[
    \widehat{U}_{\mathrm{DFS}}^{+}(x_{t-1}, b) = (x_t, b-1), \qquad \widehat{U}_{\mathrm{DFS}}^{-}(x_{t-1}, b) = (x_{t-1}, b-1)
\]
Because the frontier is always a single node, the selector is trivial: $S_{\mathrm{DFS}}(x \mid \widehat{c}) = 1$. Consequently, the trace-supported recurrence for DFS perfectly mirrors the recurrence for BFS (\Cref{eq:bfs_trace_recurrence}):
\[
    \widehat{J}_{\mathrm{DFS}}(b,t) = p_t \widehat{J}_{\mathrm{DFS}}(b-1,t+1) + (1-p_t) \widehat{J}_{\mathrm{DFS}}(b-1,t).
\]

Thus, the closed form loss will collapse to that which we derived for BFS.

\subsection{Randomized Depth-First Search (RDFS)} 
To mitigate the vulnerability of DFS to spurious paths, we introduce a randomized variant. RDFS disrupts this failure mode by altering the backtracking target. When the policy $\pi_\theta$ reaches a dead end, the algorithm does not simply return to $x_{t-1}$. Instead, it selects a state \textit{uniformly at random} from the entire active path prefix $(x_0, x_1, \dots, x_{t-1})$ and resumes expansion from there. This allows the search to stochastically jump out of deep traps, efficiently reallocating compute to higher-level branching points while preserving the depth-seeking advantages of the standard algorithm. A visualization is given in \Cref{fig:dfs_rand}.

\paragraph{Instantiation.}
Let $\widehat{c} = \langle t, b \rangle$, with the frontier tracking the full active path $\widehat{\mathcal{F}} = \{1, 2, \dots, t\}$. 
To incorporate randomized backtracking, the randomness is injected directly into the selector. When a dead end is reached, rather than the update rule deterministically collapsing the path, the negative update triggers the selector to sample a state uniformly at random from the entire available frontier $\widehat{\mathcal{F}}$. 

\paragraph{Trace-supported recurrence.}
Because of this randomized selector, the compute consumed at depth $t$ cannot be modeled as a simple sequence of independent geometric random variables. Instead, the trace-supported objective forms a dynamic program. 

Let $\widehat{J}_{\mathrm{RDFS}}(b, t)$ denote the exact probability of discovering the complete proof given that the search is currently evaluating the trace state at index $t$ with a remaining budget $b$. The boundary conditions are:
\begin{align}
    \widehat{J}_{\mathrm{RDFS}}(b, L+1) &= 1 \quad \text{for all } b \ge 0 \quad \text{(Proof successfully completed)} \\
    \widehat{J}_{\mathrm{RDFS}}(0, t) &= 0 \quad \text{for all } t \le L \quad \text{(Compute budget exhausted)}
\end{align}

For $b > 0$ and $t \le L$, the recurrence relation resolving \Cref{eq:trace_supported_recurrence} branches into two outcomes. With probability $p_t$, the model generates the correct tactic and advances. With probability $(1 - p_t)$, the model fails, and the selector backtracks uniformly to any valid ancestor index $k \in \{1, \dots, t\}$. Thus, the recurrence becomes:
\begin{equation}
    \widehat{J}_{\mathrm{RDFS}}(b, t) = p_t \widehat{J}_{\mathrm{RDFS}}(b-1, t+1) + (1 - p_t) \frac{1}{t} \sum_{k=1}^{t} \widehat{J}_{\mathrm{RDFS}}(b-1, k)
\end{equation}

Unlike Pass@$N$ or BFS, this objective does not reduce to a standard closed-form probability distribution.

\subsection{Value Guided Depth-First Search (VDFS)}
\label{sec:Vdfs}
To mitigate the vulnerability of DFS while leveraging the policy's own internal confidence, we introduce Value Guided DFS (VDFS). Unlike RDFS, which backtracks uniformly, VDFS makes an informed retreat. When the search reaches a dead end, it backtracks to an ancestor state in the active path prefix with probability proportional to that state's Levin score (from \Cref{eq:bfs_score}). By using the geometric mean of the policy's probabilities as a confidence metric, the search preferentially returns to the most promising points on the current path. This concentrates the remaining computational budget on states the policy deems most reliable, while maintaining enough stochasticity to escape deep, spurious branches. A visualizaiton can be seen in \Cref{fig:VDFS_search}.

\paragraph{Instantiation.}
Let $\widehat{c} = \langle t, b \rangle$, where $t$ is the active trace index (corresponding to the state $x_{t-1}$), and $b$ is the remaining node expansion budget. The trace-supported frontier tracks the full active path prefix of states: $\widehat{\mathcal{F}} = \{x_0, x_1, \dots, x_{t-1}\}$.

To compute the backtrack distribution, we evaluate the Levin score for each state on the frontier. Let $s_k$ denote the Levin score of state $x_k$. For the root, $s_0 = 1$. For $k \ge 1$, the score is the geometric mean of the on-trace probabilities leading to that state:
\begin{equation}
    s_k = \left( \prod_{j=1}^{k} p_j \right)^{\frac{1}{k}}
\end{equation}

The positive update advances the proof by one step: $\widehat{U}_{\mathrm{VDFS}}^{+}(t, b) = (t+1, b-1)$.
For the negative update, an off-trace tactic consumes a unit of budget and triggers a backtrack. The selector samples a backtrack target state $x_k \in \widehat{\mathcal{F}}$ with probability proportional to its score $s_k$. Transitioning to $x_k$ sets the new active trace index to $k+1$. Thus, the negative update maps stochastically:
\begin{equation}
    \widehat{U}_{\mathrm{VDFS}}^{-}(t, b) = (k+1, b-1), \quad \text{where } k \sim P(k) = \frac{s_k}{\sum_{i=0}^{t-1} s_i}
\end{equation}

\paragraph{Trace-supported recurrence.}
Let $\widehat{J}_{\mathrm{VDFS}}(b, t)$ denote the exact probability of discovering the complete proof from the trace state at index $t$ with remaining budget $b$. The boundary conditions are identical to RDFS:
\begin{align}
    \widehat{J}_{\mathrm{VDFS}}(b, L+1) &= 1 \quad \text{for all } b \ge 0 \quad \text{(Proof successfully completed)} \\
    \widehat{J}_{\mathrm{VDFS}}(0, t) &= 0 \quad \text{for all } t \le L \quad \text{(Compute budget exhausted)}
\end{align}

For $b > 0$ and $t \le L$, the recurrence resolving \Cref{eq:trace_supported_recurrence} weights the backtrack sum by the Levin scores. With probability $p_t$, the model generates the correct tactic and advances. With probability $(1 - p_t)$, the model fails, and the selector backtracks to state $x_k$ ($k \in \{0, \dots, t-1\}$) according to the score distribution. The recurrence becomes:
\begin{equation}
    \widehat{J}_{\mathrm{VDFS}}(b, t) = p_t \widehat{J}_{\mathrm{VDFS}}(b-1, t+1) + (1 - p_t) \sum_{k=0}^{t-1} \frac{s_k}{\sum_{i=0}^{t-1} s_i} \widehat{J}_{\mathrm{VDFS}}(b-1, k+1)
\end{equation}

Like RDFS, this objective doesn't have a closed form. Notably, the gradients flow not only through the direct local transitions but also through the Levin scores $s_k$, which are themselves functions of the policy probabilities. This couples the training objective, forcing the model to calibrate its on-path probabilities to optimize the resulting backtracking distribution.
\clearpage

\section{Monte Carlo Tree Search (MCTS)}
\label{sec:mcts}

The strategies considered so far commit their budget by a rule fixed in advance: DFS always retries the deepest node, RDFS backtracks uniformly, and VDFS backtracks in proportion to a static per-node score. Monte Carlo Tree Search \citep{kocsis2006bandit, coulom2006efficient} instead allocates compute according to statistics it accumulates as the search runs. We include it because it is the only strategy in our taxonomy whose selector depends on the \emph{history} of the search rather than on the current configuration alone, and it therefore tests our construction at its least convenient.

MCTS is a 4-phase process at deployment. At each node, \emph{Selection} either commits to its most promising child or stops to gain more information about another tactic at the current node. The choice to widen is governed by a progressive-widening schedule: a node that has been visited $n$ times is permitted $\lceil\sqrt{n+1}\rceil$ distinct tactic samples \citep{coulom2007computing}. \emph{Expansion} samples one tactic from $\pi_\theta$ at the selected node and submits it to the kernel, consuming one node expansion. \emph{Evaluation} assigns the resulting state a value; in the variant, we choose to use the policy's own Levin score \Cref{eq:bfs_score} rather than a learned critic, so that MCTS requires no component the other strategies don't utilize. \emph{Backup} propagates that value to every ancestor on the selected path, updating the statistics that drive future selections. A visualization is given in \Cref{fig:mcts_search}.

\paragraph{Instantiation.}
With abuse of notation, let a configuration be
\[
    \widehat{c} \;=\; \big\langle\, (n_k, m_k, W_k)_{k=0}^{L-1},\; b \,\big\rangle,
\]
where $n_k$ is the visit count of the state $x_k$, $m_k$ the number of tactics already sampled there, $W_k$ the accumulated backed-up value, and $b$ the remaining expansion budget. The trace-supported frontier is the prefix $\{x_0,\ldots,x_{L-1}\}$. The selector is not stochastic: the descent stops at the shallowest node that is either allowed a new tactic by the widening schedule or has no child yet,
\begin{equation}
\begin{aligned}
    d(\widehat{c}) &\;=\;
    \min\Big\{\,k \;:\; m_k < \big\lceil\sqrt{n_k+1}\big\rceil
    \;\;\text{or}\;\; x_{k+1}\ \text{not yet reached}\,\Big\}, \\
    &\qquad
    \widehat{S}_{\mathrm{MCTS}}(x_k \mid \widehat{c}) = \mathbf{1}\{k = d(\widehat{c})\}.
\end{aligned}
\label{eq:mcts_selector}
\end{equation}
The updates apply the expansion and the backup together. Writing $d = d(\widehat{c})$, a demonstrated tactic advances the trace and backs up the value $V(x_{d+1})$ of the state it reaches, while a miss backs up zero:
\begin{align}
    \widehat{U}_{\mathrm{MCTS}}^{+}(\widehat{c})
    &:\quad m_d \mathrel{+}= 1,\quad
    n_k \mathrel{+}= 1,\; W_k \mathrel{+}= V(x_{d+1})\ \ \forall k \le d,\quad
    b \mathrel{-}= 1,
    \label{eq:mcts_uplus}\\
    \widehat{U}_{\mathrm{MCTS}}^{-}(\widehat{c})
    &:\quad m_d \mathrel{+}= 1,\quad
    n_k \mathrel{+}= 1,\; W_k \mathrel{+}= 0\ \ \forall k \le d,\quad
    b \mathrel{-}= 1,
    \label{eq:mcts_uminus}
\end{align}
with solved configurations (for the trace supported version) those in which depth $L$ has been reached and dead configurations those with $b = 0$. Both updates decrement the budget, so the recurrence terminates.

For BFS, the naive selector is degenerate on
a demonstrated trace because a chain never offers two frontier candidates at once. The same erasure happens here, in a slightly different place. The UCT rule ranks the children of a node by an exploration-adjusted value estimate, but under the trace transition \Cref{eq:trace_transition} every node has at most one child, so there is never a comparison for it to make. What survives is only the widening schedule in \Cref{eq:mcts_selector}: the rule that decides how many tactic samples a node receives before the search commits to descending. Thus in practice, this strategy trains like a local Pass@$N$ at each node.

\clearpage

\section{Strategy Matching Within the Search-Aware Family}

The main comparison evaluates search-aware CAT and search-agnostic UA across deployed strategies. Here, we isolate a different question: within the search-aware family, does matching the training objective to the deployed strategy improve proof success? Using the fixed-budget setup of \Cref{subsec:exp_matrix}, we evaluate the Pass@$N$ and BFS search-aware adapters under both deployment strategies, with CE as a reference.

\begin{table}[ht!]
\centering
\caption{\textbf{Strategy matching within the search-aware family.} Proof-success rates (\%) at $N=256$ tactic expansions. Columns are deployed strategies; rows are training objectives. Bold marks the higher point estimate among the two search-aware objectives in each column.}
\label{tab:matrix_abl_2}
\small
\begin{tabular}{lcc}
\toprule
Training objective & Pass@$N$ & BFS \\
\midrule
CE                         & 21.2 & 23.8 \\
Search-aware CAT (Pass@$N$) & \textbf{26.6} & 24.5 \\
Search-aware CAT (BFS)      & 23.7 & \textbf{25.1} \\
\bottomrule
\end{tabular}
\end{table}

The matched objective has the higher observed success rate under each deployment strategy. This supports strategy matching within the tested search-aware pair, complementing the comparison with shared UA in \Cref{tab:matrix}.
\clearpage
\section{Experimental Details}
\label{app:experimental_details}

\subsection{Common setup}
\label{app:details_common}

\paragraph{Environment.}
All experiments run against Lean~4 through LeanDojo~$4.20.0$ \citep{yang2023leandojo}. The policy is prompted with the pretty-printed proof state and emits a single tactic, which is submitted to the Lean kernel; the kernel verdict determines the transition. One \emph{node expansion} denotes one policy sample together with its kernel verification, and is the unit in which all budgets are expressed. Kernel interaction is serial within a theorem, so wall-clock cost per unsolved theorem grows linearly in the budget.

\paragraph{Data.}
Training and evaluation both draw on \texttt{cat-\allowbreak searcher/\allowbreak leandojo-\allowbreak benchmark-\allowbreak 4-\allowbreak random}, a random split of mathlib4 \citep{mathlib2020}. The alignment stage consumes $40{,}660$ traced proofs comprising $76{,}003$ (state, tactic) training examples. The evaluation pool is the $458$ held-out theorems whose reference proofs contain between $2$ and $5$ tactic steps; there is no overlap between training and evaluation theorems. Reference proofs are used only to compute training weights and to define the pool. At evaluation, a theorem counts as solved if the search returns any kernel-verified proof, whether or not it matches the reference.

\paragraph{Model and adapters.}
The policy is Qwen2.5-Math-7B-Instruct \citep{qwen2024qwen25} in bfloat16. All fine-tuning uses LoRA \citep{hu2022lora} with rank $r=16$, $\alpha=32$, dropout $0.05$, applied to the query, key, value, output, gate, up, and down projections. This trains $40{,}370{,}176$ of $7{,}655{,}986{,}688$ parameters ($0.53\%$). Gradient checkpointing is enabled throughout. All runs use a single NVIDIA A100-40GB. Training runs used FlashAttention-2 \citep{dao2023flashattention2}.

\subsection{Training procedure and hyperparameters}
\label{app:details_training}

\paragraph{Stage 1: Warm-up.}
A single supervised checkpoint is the common ancestor of every model we report. The first set of experiments branch off of two warm up epochs of SFT; while the second set branch off of only one. This discrepancy is a result of human error; however, given that runs are not compared cross-experiment, we find it reasonable to keep.

\paragraph{Stage 2: Alignment.}
From the warm-up checkpoint the runs branch. The control receives one further epoch of standard cross-entropy SFT. Each aligned model receives one further epoch over the same examples in the same order under the same optimizer settings, with the only difference being the scalar weight $\widehat{w}_t^{\mathcal{A}}$ applied to each tactic's cross-entropy gradient. An alignment epoch is $19{,}001$ optimizer steps and takes approximately $2.4$\,h.

\begin{table}[ht]
\centering
\caption{Optimization hyperparameters, identical for the control and every aligned run.}
\label{tab:hparams}
\vspace{3pt}
\small
\begin{tabular}{ll}
\toprule
Optimizer & AdamW \\
Learning rate & $1\times10^{-4}$ \\
Schedule & linear decay, warm-up over $5\%$ of steps \\
Micro-batch size & $4$ \\
Gradient accumulation & $16$ \\
Effective batch size & $64$ \\
Gradient clipping & $1.0$ (global $\ell_2$ norm) \\
Maximum sequence length & $512$ tokens \\
Precision & bfloat16 with autocast \\
Alignment epochs & $1$ (on top of the shared warm-up) \\
Random seed & $42$ \\
\bottomrule
\end{tabular}
\end{table}

Non-finite losses are skipped without an optimizer step; this occurs rarely and is logged. Examples exceeding the maximum sequence length are dropped at dataset construction rather than truncated, so no partially-supervised tactic enters training. The closed-form weights are evaluated directly in log space for numerical stability. The weights defined by a dynamic program are obtained by autodiff.

\subsection{Inference and evaluation}
\label{app:details_inference}

\paragraph{Serving.}
Generation uses vLLM~$0.6.3$ with the LoRA adapter supplied per request, GPU memory fraction $0.85$, and maximum model length $4096$ tokens. Sampling temperature is $0.8$. The evaluation seed ($42$) fixes both the shuffle of the theorem pool and the sampling stream, so every configuration sees the identical theorems in the identical order; this is what licenses the paired analysis below and what makes a truncated cell a well-defined prefix of the full pool.

\begin{table}[ht]
\centering
\caption{Search parameters. Budgets are node expansions.}
\label{tab:search_params}
\vspace{3pt}
\small
\begin{tabular}{lll}
\toprule
Parameter & Experiment 1 & Experiment 2 \\
\midrule
Budget $N$ & $256$ & $16$, $64$, $256$, $1024$ \\
Best-first beam width & $2$ & $2$ \\
Expansions per pop & $4$ & $4$ \\
Best-first score exponent $\alpha$ & $0.5$ & $0.5$ \\
Pass@$N$ generation batch & $16$ & $64$ \\
Maximum proof depth & $5$ & $5$ \\
Per-strategy wall-clock cap & $90$\,s & none \\
Per-theorem wall-clock cap & $500$\,s & none \\
\bottomrule
\end{tabular}
\end{table}

\paragraph{Scoring and truncation.}
A theorem is scored as solved only if a kernel-verified proof is returned within the budget; unsolved-within-budget counts as a failure, with no exclusion. The two experiments differ in one respect that we record explicitly. Experiment 1 imposes the wall-clock caps in \Cref{tab:search_params}, so a theorem may in principle be cut off before its expansion budget is exhausted; Experiment 2 disables them, guaranteeing that every theorem spends its full budget. The caps are the reason the two experiments' absolute rates are not directly comparable, and they were removed for Experiment 2 because the effect grows with the budget: a wall-clock cap binds far more often at $N=1024$ than at $N=64$ and would otherwise have depressed exactly the high-budget cells whose behavior the experiment is about.

\paragraph{Statistics.}
Because all configurations are evaluated on the same theorems in the same order, every comparison is paired. For two configurations we report $b$, the number of theorems solved by the second and not the first, and $c$, the number solved by the first and not the second, on the subset $n$ that both evaluated. The reported difference is $(b-c)/n$; its $95\%$ interval uses the McNemar variance $\operatorname{Var}[(b-c)/n] = (b+c)/n^{2}$; and the reported $p$-value is the exact two-sided McNemar (sign) test on the $b+c$ discordant pairs. We use the paired difference rather than the difference of marginal rates throughout, because the two diverge whenever the cells being compared cover different numbers of theorems, and only the paired quantity is consistent with the interval and the $p$-value beside it. Where a figure shows unpaired intervals on individual rates (\Cref{fig:main_comparison}) this is stated in the caption; those intervals are wider than the paired ones and overlapping bars do not indicate a non-significant difference.

\subsection{Experiment 1: fixed-budget comparison}
\label{app:details_exp1}

\paragraph{Configurations.}
Six deployed strategies at $N=256$: Pass@$N$ and best-first search from \Cref{sec:strategies}, and DFS, RDFS, VDFS and MCTS from \Cref{app:DFS}. Each is paired with the CAT weight derived for it (same for BFS/DFS), and we additionally deploy the single shared UA adapter under every strategy.

\paragraph{Control.}
The control for this experiment is the epoch-matched model: warm-up plus one additional epoch of standard SFT. Control and aligned models therefore receive identical total optimization, and the comparison isolates the loss weighting.

\subsection{Experiment 2: budget sweep}
\label{app:details_exp2}

\paragraph{Budget-matched adapters.}
We train a separate adapter per evaluation budget with the weight's budget parameter set to the deployment value, and evaluate each at its own budget only.

\paragraph{Control.}
The control for this experiment is the shared warm-up checkpoint, with an additional SFT epoch.

\paragraph{Pools.}
The $N \in \{16, 64, 256, 1024\}$ are evaluated on the full $458$-theorem pool. However, we only allowed $80$ GPU hours so not all evaluations were fully completed.

\paragraph{Reproduction.}
Given the trained adapters, each cell is one invocation of the evaluation harness with the strategy, budget, and adapter path. Cells write one JSON log per strategy keyed by theorem name, and all tables and figures in this paper are generated directly from those logs.

\clearpage

\section{High-Budget Evaluation}
\label{app:sweep_1024}

To complement \Cref{subsec:exp_sweep}, we also evaluated budget-matched CAT adapters at $N=1024$ under Pass@$N$ and BFS, using the same training and evaluation protocol. The evaluation-time limit restricted the paired comparisons to $57$ theorems for Pass@$N$ and $56$ for BFS. \Cref{tab:sweep_1024} reports these results, and \Cref{fig:sweep_gain_full} shows the complete budget sweep.

\begin{table}[ht]
\centering
\caption{\textbf{Compute-limited evaluation at $N=1024$ node expansions.}
$n$ is the paired subset size, $b$ counts CAT-only successes, and $c$ counts control-only successes. Gains and $95\%$ intervals are in percentage points. Intervals and exact two-sided McNemar tests follow \Cref{tab:sweep}.}
\label{tab:sweep_1024}
\small
\begin{tabular}{llcccc}
\toprule
Strategy & Budget $N$ & $n$ & $b/c$ & Gain (pp), $95\%$ interval & $p$ \\
\midrule
Pass@$N$ & $1024$ & 57 & 6/1 & $+8.8\;[-0.3,17.9]$ & $0.125$ \\
BFS      & $1024$ & 56 & 5/2 & $+5.4\;[-3.9,14.6]$ & $0.453$ \\
\bottomrule
\end{tabular}
\end{table}

\begin{figure}[htbp]
\centering
\includegraphics[width=0.60\linewidth]{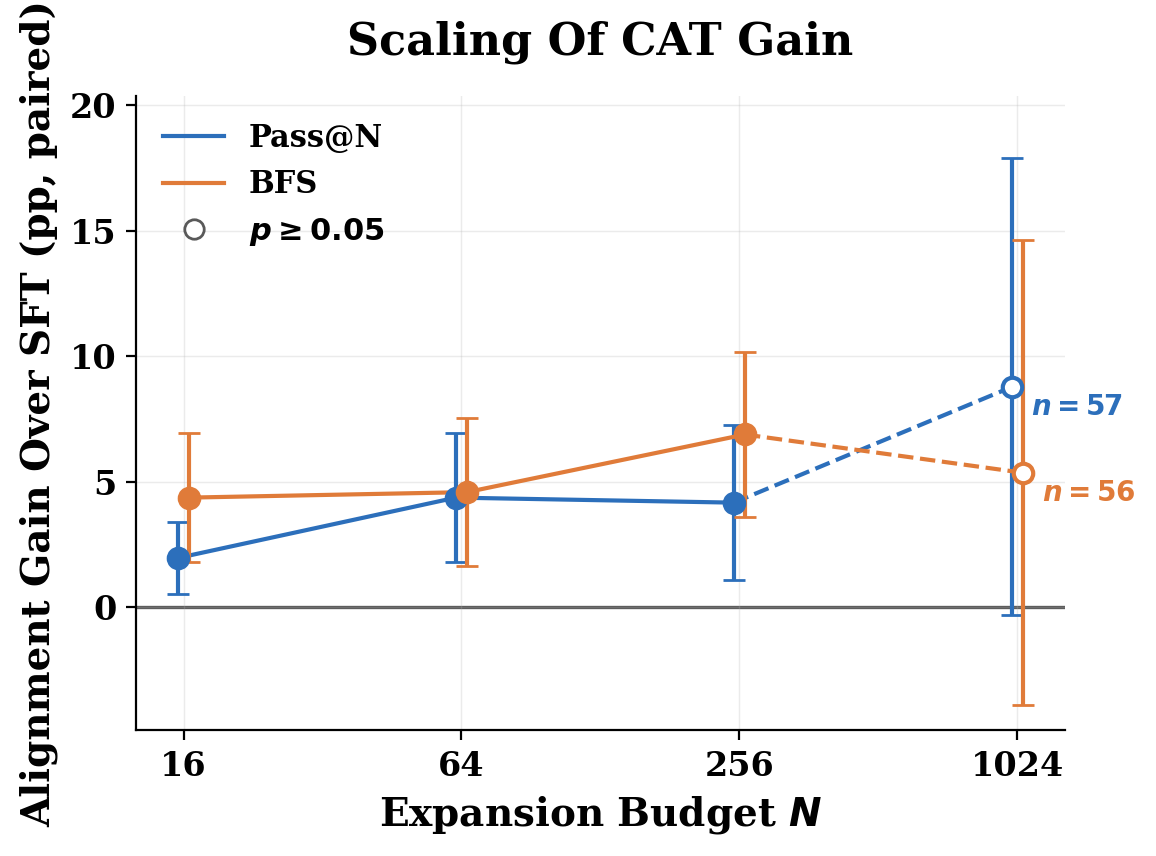}
\caption{Complete budget sweep, including the compute-limited
$N=1024$ evaluation. Error bars show $95\%$ intervals; filled markers
denote $p<0.05$ and hollow markers denote $p\geq0.05$. Dashed segments
connect to the $N=1024$ estimates, annotated with their paired subset
sizes. The completed subsets differ across budgets, so connecting
lines are descriptive rather than a common-cohort comparison.}
\label{fig:sweep_gain_full}
\end{figure}

\paragraph{Interpretation.}
Both high-budget point estimates favor CAT, but each comparison is
based on only seven discordant theorem pairs, and both reported
$95\%$ intervals include zero. These results are therefore inconclusive
about the magnitude of the alignment benefit at $N=1024$ and do not
establish a trend from $N=256$ to $N=1024$. We report them for
completeness rather than using them as evidence of increasing
alignment gains at large budgets.

\clearpage

\section{Priors on Off-Trace Behaviour}
\label{app:offtrace}

In \Cref{sec:scaling}, we characterized the gap between our trace-supported surrogate and the deployed search. However, we haven't deeply explored if we can use this to meaningfully improve training. We implied that placing priors on off-trace seems like a reasonable way to utilize this observation. We use this section to take a brief step in that direction and see if a simple prior on loss can be beneficial.

\subsection{A two-parameter prior}
\label{app:offtrace_family}

\Cref{prop:shared_budget_bracketing_revised} and \Cref{prop:hump_floor} show that off-trace behaviour enters the objective through two scalars:
\begin{itemize}
    \item $\kappa \ge 1$, the expected number of expansions a miss consumes, counting the off-trace excursion before the search returns to the trace;
    \item $q \in [0,1)$, the probability that a miss launches an excursion that never returns.
\end{itemize}
Both enter mechanically and in a strategy-independent way. \Cref{prop:shared_budget_bracketing_revised} states that a deployment in which a miss costs $\kappa$ expansions realises the surrogate at the deflated budget
\begin{equation}
    N_{\mathrm{eff}} \;=\; L + \frac{N - L}{\kappa} \;\le\; N,
    \label{eq:offtrace_neff}
\end{equation}
so $\kappa$ is applied by evaluating the strategy's own objective at, $N_{\mathrm{eff}}$ in place of $N$. Absorption enters the trace-supported recurrence \Cref{eq:trace_supported_recurrence} by splitting the miss branch: with probability $(1-p_t)q$ the search is absorbed and contributes nothing, and with probability $(1-p_t)(1-q)$ it continues as the uncorrected recurrence prescribes. Every occurrence of $(1-p_t)$ on a continuation branch is therefore replaced by $(1-p_t)(1-q)$, with the missing mass leaking to a terminal failure state.

Note that a better prior can make assumptions on $\kappa$ and $q$ as functions of p or prior probabilities, but we are merely going to test the simple case.

\subsection{Applying the prior to each strategy}
\label{app:offtrace_impl}

The prior is applied to the objective the surrogate implies for each strategy.

\paragraph{Best-first search.} The trace-supported objective for BFS is the retry recurrence \Cref{eq:bfs_trace_recurrence}. The prior is applied directly to that recurrence,
\begin{equation}
    \widehat{J}^{\kappa,q}(b,t)
    = p_t\,\widehat{J}^{\kappa,q}(b-1,\,t+1)
    + (1-p_t)(1-q)\,\widehat{J}^{\kappa,q}(b-1,\,t),
    \label{eq:offtrace_bfs_dp}
\end{equation}
with $\widehat{J}^{\kappa,q}(b,L)=1$, $\widehat{J}^{\kappa,q}(0,t)=0$, run to $b = N_{\mathrm{eff}} = L+(N-L)/\kappa$. 

\paragraph{RDFS and VDFS.} The same substitution applies to the dynamic programs of \Cref{app:DFS}: the budget axis runs to $N_{\mathrm{eff}}$ and the backtrack branch carries the factor $(1-q)$, leaving the uniform and value-weighted backtrack kernels otherwise unchanged.

\paragraph{MCTS.} Similarly, we just readjust the dynamic program in essentially the same fashion as the other search strategies.

\subsection{Setup}
\label{app:offtrace_setup}

The prior is set by hand. The primary setting is $\kappa = 5$, $q = 0.02$; under BFS we additionally train $\kappa = 3$, $q = 0$ to probe sensitivity. Everything else matches \Cref{subsec:exp_matrix}: the same warm-up checkpoint, one alignment epoch over the same examples in the same order under the same optimiser settings, a budget of $N = 256$ node expansions, and the full $458$-theorem pool. The only difference between a prior-corrected adapter and its uncorrected counterpart is the scalar weight applied to each tactic's cross-entropy gradient. Each adapter is deployed under the strategy whose objective it corrects.

\subsection{Results}
\label{app:offtrace_results}
\begin{table}[ht]
\centering
\caption{The off-trace prior applied to each strategy's own CAT objective
(proofs found, \%, $N{=}256$ node expansions, $458$ theorems). ``CAT'' is the
uncorrected weight of \Cref{sec:strategies,app:DFS}, i.e.\ the
$(\kappa,q)=(1,0)$ member of the family.}
\label{tab:offtrace_exact}
\vspace{3pt}
\small
\begin{tabular}{llccc}
\toprule
Strategy & Prior & SFT & CAT & $+$prior \\
\midrule
\multirow{2}{*}{BFS}
 & $\kappa{=}5,\ q{=}0.02$ & 23.8 & 25.1 & \textbf{25.8} \\
 & $\kappa{=}3,\ q{=}0$    & 23.8 & \textbf{25.1} & \textbf{25.1} \\
\midrule
RDFS & $\kappa{=}5,\ q{=}0.02$ & 20.7 & \textbf{23.6} & 22.7 \\
VDFS & $\kappa{=}5,\ q{=}0.02$ & 21.2 & 23.8 & \textbf{24.2} \\
MCTS & $\kappa{=}5,\ q{=}0.02$ & 20.3 & 22.5 & \textbf{22.9} \\
\bottomrule
\end{tabular}
\end{table}

\Cref{tab:offtrace_exact} reports the strategies for which the prior was applied. Generally, the priors edged out the basic CAT weights but not by a significant amount in any case. The conclusion we take from this is that perhaps with optimal tuning of the prior, we can get increased gains, but the value is marginal.
\clearpage

\section{Search-Agnostic Uniform-Allocation Objective: Additional Details}
\label{app:meanfield}

This appendix supplements the search-agnostic UA objective introduced in \Cref{subsec:mf_main}. UA accounts for the compute budget through a uniform local allocation without specifying the deployed search rule. We relate its weighting function to the shared-budget retry objective, providing an additional interpretation of how the two objectives account for compute.

\paragraph{From shared-budget to local criticality.}
Consider the retry objective in \Cref{eq:bfs_trace_objective}, with independent $T_t\sim\mathrm{Geom}(p_t)$, $p_t\in(0,1)$, and integer $N\ge L$. Write
\[
    S_L=\sum_{s=1}^{L}T_s,
    \qquad R_t=\sum_{s\ne t}T_s,
    \qquad B_t=N-R_t.
\]
Here, $B_t$ is the budget left for step $t$ after accounting for the waiting times of all other steps. For integer $k\ge1$, define the success probability and gradient weight of a single-tactic retry problem by
\begin{equation}
\begin{split}
    F(p,k)&=1-(1-p)^k,\\
    h(p,k)&=\frac{\partial\log F(p,k)}{\partial\log p}
    =\frac{k p(1-p)^{k-1}}{1-(1-p)^k}.
\end{split}
\label{eq:mf_local_criticality}
\end{equation}
The shared-budget retry weight then has the exact representation
\begin{equation}
    \widehat w_t^{\mathrm{retry}}(N)
    =\mathbb E\!\left[
        h(p_t,B_t)\,\middle|\,S_L\le N
      \right].
    \label{eq:mf_budget_average}
\end{equation}
To see this, extend $F(p,k)=0$ for $k\le0$ and write $\widehat{\mathcal J}^{\mathrm{retry}}(N) =\mathbb E[F(p_t,B_t)]$. The distribution of $B_t$ does not depend on $p_t$, so differentiating and dividing by the success probability weights each residual budget by $\mathbb P(B_t=k)F(p_t,k)/\widehat{\mathcal J}^{\mathrm{retry}}(N) =\mathbb P(B_t=k\mid S_L\le N)$, giving \Cref{eq:mf_budget_average}. Only $k\ge1$ occurs under this conditional distribution.

Thus, the shared-budget weight combines two ingredients: the local criticality $h(p_t,k)$ and a distribution over the budget available to that step. The other tactic probabilities enter through this budget distribution.

\paragraph{Uniform allocation and objective.}
The UA objective instead assigns each demonstrated step a uniform local allocation,
\begin{equation}
    \bar K=\frac{N}{L}.
    \label{eq:mf_kbar}
\end{equation}
This is a modeling choice, not an estimate of $\mathbb E[B_t\mid S_L\le N]$ or an average over deployed search strategies. It yields the factorized objective
\begin{equation}
    \widehat{\mathcal J}^{\,\mathrm{UA}}(\theta)
    =\prod_{t=1}^{L}\Big[1-(1-p_t)^{\bar K}\Big].
    \label{eq:mf_meanfield_J}
\end{equation}
For integer $\bar K$, this is the probability that every demonstrated step succeeds within its own quota of $\bar K$ independent attempts, without transferring unused quota between steps. Noninteger $N/L$ uses the continuous extension of the same formula. Applying the gradient identity in \Cref{eq:generic_cat_gradient} gives
\begin{equation}
    \widehat w_t^{\,\mathrm{UA}}
    =\frac{\bar K p_t(1-p_t)^{\bar K-1}}
           {1-(1-p_t)^{\bar K}}
    =h(p_t,\bar K).
    \label{eq:mf_meanfield_weight}
\end{equation}
Comparing \Cref{eq:mf_budget_average,eq:mf_meanfield_weight}, UA evaluates the local criticality at a fixed allocation, whereas the shared-budget retry objective averages it over the conditional distribution of the residual budget. The UA weight is the exact gradient weight of its own objective in \Cref{eq:mf_meanfield_J}. This connection does not require UA to approximate the dynamics of a particular deployed search algorithm.

\paragraph{Properties of the uniform-allocation weights.}
UA is search-agnostic, but not compute-agnostic. At $\bar K=1$, \Cref{eq:mf_meanfield_J} reduces to the demonstrated trace likelihood and all weights equal one, recovering CE. For fixed $\bar K\ge1$, the weight approaches one as $p_t\to0$; for fixed $p_t\in(0,1)$, it approaches zero as $\bar K\to\infty$. Thus, tactics that are unlikely to be found within the local budget retain high weight, whereas tactics that can already be recovered through repeated sampling receive less emphasis. Unlike the shared-budget retry weight, the UA weight does not depend on the other steps' consumption of a shared budget. At fixed $N$ and $L$, each weight depends only on $p_t$.

\paragraph{Interpretation.}
The shared-budget and UA objectives retain the same local retry criticality but differ in their treatment of compute allocation. As discussed in \Cref{sec:main_theory}, demonstrations alone do not generally determine deployed gradient weights because they omit alternative proofs and off-trace recovery behavior. The empirical comparison in \Cref{tab:matrix} evaluates the usefulness of search-aware CAT and search-agnostic UA for training; it does not establish which objective more accurately approximates the deployed gradient weights.

\clearpage
\section{LLM Disclosure}
LLMs were used in 3 primary ways. The first is essentially all experiments were set up and instantiated with LLM coding. We had a separate LLM check for correctness of the code. The second is when the paper was drafted, we repeatedly asked LLMs to critique and we incorporated its feedback if we deemed it to be appropriate. The third was finding related papers. We read all of these before ever citing them to confirm the relation to our work.
\end{document}